%%%%%%%% ICML 2024 EXAMPLE LATEX SUBMISSION FILE %%%%%%%%%%%%%%%%%

\documentclass{article}

% Recommended, but optional, packages for figures and better typesetting:
\usepackage{microtype}
\usepackage{graphicx}
\usepackage{booktabs} % for professional tables

% hyperref makes hyperlinks in the resulting PDF.
% If your build breaks (sometimes temporarily if a hyperlink spans a page)
% please comment out the following usepackage line and replace
% \usepackage{icml2024} with \usepackage[nohyperref]{icml2024} above.
\usepackage{hyperref}

% Attempt to make hyperref and algorithmic work together better:

% Use the following line for the initial blind version submitted for review:
% \usepackage{icml2024}

% If accepted, instead use the following line for the camera-ready submission:
\usepackage[accepted]{icml2024}

% For theorems and such
\usepackage{amsmath}
\usepackage{amssymb}
\usepackage{mathtools}
\usepackage{amsthm}

% if you use cleveref..
\usepackage[capitalize,noabbrev]{cleveref}

% new package imported by us
\usepackage[longend,ruled,vlined,linesnumbered]{algorithm2e}
% linesnumbered
\usepackage{subfig}
\usepackage{bbm}
\usepackage{bm}
\usepackage{makecell}
\usepackage{multirow}
\usepackage{tikz-cd}
\usepackage{soul}
\usepackage[export]{adjustbox}

%%%%%%%%%%%%%%%%%%%%%%%%%%%%%%%%
% THEOREMS
%%%%%%%%%%%%%%%%%%%%%%%%%%%%%%%%
\theoremstyle{plain}
\newtheorem{theorem}{Theorem}[section]

\newtheorem{lemma}[theorem]{Lemma}

\theoremstyle{definition}
\newtheorem{definition}[theorem]{Definition}

\newtheorem{example}[theorem]{Example}
\theoremstyle{remark}

\newcommand{\test}{\bm{t}}
\newcommand{\idata}{\mathcal{D}^\square}

\newcommand{\prob}{\mathbb{P}}

\newcommand{\NP}{\mathbf{NP}}

\newcommand{\crnbc}{\mathsf{CR\text{-}NaiveBayes}}

\newcommand{\poisoncrnbc}{\mathsf{CR\text{-}NaiveBayes}^{\dagger}}

\newcommand{\support}[3]{S_{{#1}}({#2} \mid {#3})}
\newcommand{\minsupport}[3]{S_{{#1}}^{\downarrow}({#2} \mid {#3})}
\newcommand{\maxsupport}[3]{S_{{#1}}^{\uparrow}({#2} \mid {#3})}

% Todonotes is useful during development; simply uncomment the next line
%    and comment out the line below the next line to turn off comments
%\usepackage[disable,textsize=tiny]{todonotes}
\usepackage[textsize=tiny]{todonotes}

% The \icmltitle you define below is probably too long as a header.
% Therefore, a short form for the running title is supplied here:
\icmltitlerunning{Naive Bayes Classifiers over Missing Data: Decision and Poisoning}

\begin{document}

\twocolumn[
\icmltitle{Naive Bayes Classifiers over Missing Data: Decision and Poisoning}

% It is OKAY to include author information, even for blind
% submissions: the style file will automatically remove it for you
% unless you've provided the [accepted] option to the icml2024
% package.

% List of affiliations: The first argument should be a (short)
% identifier you will use later to specify author affiliations
% Academic affiliations should list Department, University, City, Region, Country
% Industry affiliations should list Company, City, Region, Country

% You can specify symbols, otherwise they are numbered in order.
% Ideally, you should not use this facility. Affiliations will be numbered
% in order of appearance and this is the preferred way.
\icmlsetsymbol{equal}{*}

\begin{icmlauthorlist}
\icmlauthor{Song Bian}{equal,yyy}
\icmlauthor{Xiating Ouyang}{equal,yyy}
\icmlauthor{Zhiwei Fan}{yyy}
\icmlauthor{Paraschos Koutris}{yyy}
% \icmlauthor{Firstname5 Lastname5}{yyy}
% \icmlauthor{Firstname6 Lastname6}{sch,yyy,comp}
% \icmlauthor{Firstname7 Lastname7}{comp}
%\icmlauthor{}{sch}
% \icmlauthor{Firstname8 Lastname8}{sch}
% \icmlauthor{Firstname8 Lastname8}{yyy,comp}
%\icmlauthor{}{sch}
%\icmlauthor{}{sch}
\end{icmlauthorlist}

\icmlaffiliation{yyy}{Department of Computer Sciences, University of Wisconsin-Madison, Madison WI, USA}
% \icmlaffiliation{comp}{Company Name, Location, Country}
% \icmlaffiliation{sch}{School of ZZZ, Institute of WWW, Location, Country}

\icmlcorrespondingauthor{Song Bian}{sbian8@wisc.edu}
% \icmlcorrespondingauthor{Firstname2 Lastname2}{first2.last2@www.uk}

% You may provide any keywords that you
% find helpful for describing your paper; these are used to populate
% the "keywords" metadata in the PDF but will not be shown in the document
\icmlkeywords{Machine Learning, ICML}

\vskip 0.3in
]

% this must go after the closing bracket ] following \twocolumn[ ...

% This command actually creates the footnote in the first column
% listing the affiliations and the copyright notice.
% The command takes one argument, which is text to display at the start of the footnote.
% The \icmlEqualContribution command is standard text for equal contribution.
% Remove it (just {}) if you do not need this facility.

%\printAffiliationsAndNotice{}  % leave blank if no need to mention equal contribution
\printAffiliationsAndNotice{\icmlEqualContribution} % otherwise use the standard text.

\begin{abstract}

We study the certifiable robustness of ML classifiers on dirty datasets that could contain missing values.
A test point is \emph{certifiably robust} for an ML classifier if the classifier returns the same prediction for that test point, regardless of which cleaned version (among exponentially many) of the dirty dataset the classifier is trained on.
In this paper, we show theoretically that for Naive Bayes Classifiers (NBC) over dirty datasets with missing values: (i) there exists an efficient polynomial time algorithm to decide whether multiple input test points are all certifiably robust over a dirty dataset; and (ii) the data poisoning attack, which aims to make all input test points certifiably non-robust by inserting missing cells to the clean dataset, is in polynomial time for single test points but \textbf{NP}-complete for multiple test points. Extensive experiments demonstrate that our algorithms are efficient and outperform existing baselines.
%First, we present an efficient algorithm that \emph{decides} whether multiple test points are certifiably robust for NBC. Next, we present a novel poisoning attack that exploits missing values, aimed at making the test point not certifiably robust for NBC. Our theoretical analysis shows that the proposed attacks can \emph{poison} a clean dataset by inserting the optimal number of missing values such that a single test point is not certifiably robust for NBC. In addition, we also prove that poisoning a clean dataset such that multiple test points all become certifiably non-robust is \textbf{NP}-hard. The experiments show that our algorithm outperforms existing baselines.

% Our experiments demonstrate the efficiency and effectiveness of our decision algorithm and proposed attacks across ten real-world datasets. 

% achieve up to $19.5\times$ and $3.06\times$ speed-up over the baseline algorithms across ten real-world datasets. 

% Furthermore, we indentify several research directions that can guide researchers to integrate \emph{Certain Predictions} into real data cleaning systems based on the experimental results. 

% \zhiwei{here should we be more specific?(i.e., talking about speedups for decision and poisoning separately; talk about the practical inefficiency observed on counting)}
% \xiating{edited.}
\end{abstract}

\section{Introduction}

The reliability of Machine Learning (ML) applications is heavily contingent upon the integrity of the training data. However, real-world datasets are frequently plagued with issues such as missing values, noise, and inconsistencies~\cite{pwl+21, acd+16, cik+16, xbz+21, DBLP:conf/sigmod/PicadoDTL20}. Traditional approaches to handling such ``dirty'' datasets typically involve data cleaning.
%In this paper, we specifically focus on the problem of handling dirty datasets that contain missing values marked by \texttt{NULL}.
\begin{figure*}[!t]
    \centering
    \includegraphics[width=0.9\textwidth]{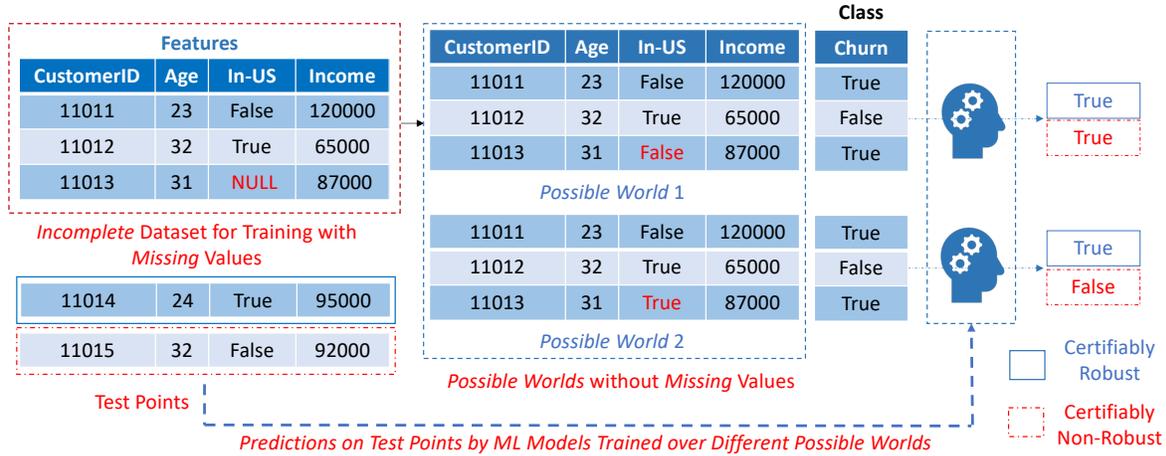}
    \caption{Example of an incomplete dataset, the possible worlds induced by the incomplete dataset, and the certifiably robust/non-robust predictions for data points with missing values over the possible worlds.
    % \xiating{From my understanding, we do not insist on a specific ML model used in this example. Please confirm (and if so, should we make one such that it is specific to NBC?)}
    }
    \label{fig:example}
    % \vspace{-6mm}
\end{figure*}
%Data cleaning is the most commonly used approach to address these issues, with the goal of producing high-quality data.
% A long line of research has focused on developing powerful data cleaning frameworks and tools~\cite{wkf+14,kww+16,wfg+17,rci+17}.
% using techniques from error detection~\cite{acd+16, mac+19}, missing value imputation~\cite{tkf+13}, and data deduplication~\cite{cik16}, etc. 
% Multiple data cleaning frameworks have been developed~\cite{}.
% Data cleaning has also been studied under different contexts \cite{kohler2021possibilistic,DBLP:journals/pvldb/ChengCX08,bertossi2013data,DBLP:conf/sigmod/KhayyatIJMOPQ0Y15,DBLP:conf/icde/BohannonFGJK07,DBLP:journals/pvldb/ProkoshynaSCMS15}. 
%However, data cleaning is commonly seen as a laborious and time-intensive process in data analysis, albeit various efforts to accelerate the data cleaning process~\cite{DBLP:journals/pvldb/RekatsinasCIR17,DBLP:conf/icde/ChuIP13,DBLP:conf/sigmod/ChuMIOP0Y15,DBLP:journals/pvldb/RezigOAEMS21}.
While effective, data cleaning can be an arduous and resource-intensive process, albeit various efforts to accelerate the data cleaning process~\cite{DBLP:conf/icde/ChuIP13,DBLP:conf/sigmod/ChuMIOP0Y15,DBLP:journals/pvldb/RekatsinasCIR17,DBLP:journals/pvldb/RezigOAEMS21}.% Moreover, current data cleaning frameworks often are agnostic to the downstream applications, leading to extraneous costs spent on the cleaning process and causing unnecessary delays~\cite{DBLP:conf/sigmod/WangKFGKM14,DBLP:journals/pvldb/AltowimKM14}. 
%In light of this, a natural question to ask is: 
This raises a pivotal question in the domain of ML: 
\begin{quote}
% \vspace{-2mm}
%{\em Can we reduce the cost of data cleaning by taking into account the downstream ML application}? 
{\em Can we circumvent the exhaustive data cleaning process by understanding the impact of data quality on ML models, particularly focusing on the notion of certifiable robustness}?
% \vspace{-2mm}
\end{quote}
We explain it with the following example.

\begin{example}
\label{ex:insurance}
Consider a data analyst working at an insurance company trying to predict whether an existing customer will leave (churn) using an ML classifier. 
Figure~\ref{fig:example} illustrates three customers with age, residency status (inside or outside the US), and income, which are features used for this classification task. 
The data for the customer with CustomerID 11013 is incomplete, as the residency information is missing (marked with \texttt{NULL}). Therefore there are two possible ways to clean the dataset: we can assign the missing feature ``In-US'' to either \textsf{True} or \textsf{False}, resulting in two possible worlds generated from the incomplete dataset. 

For the customer with CustomerID 11014, the ML classifier may predict \textsf{True} for that customer in both possible worlds. If this information is known before training and making predictions, data cleaning can be skipped since the prediction is {\em robust} with respect to the missing value.
However, the customer with CustomerID 11015 presents a different scenario, where the predictions made by the model trained on two possible worlds disagree: the ML classifier would predict \textsf{True} in the possible world where the feature ``In-US'' is \textsf{False}, but in the other possible world, the prediction is \textsf{False}.
Therefore, data cleaning is necessary to predict the customer with CustomerID 11015 accurately.
\end{example}

%Inspired by Example~\ref{ex:insurance}, a preliminary step is the study of \emph{certifiable robustness} for ML classifiers.
We investigate whether ML classifiers produce consistent predictions over incomplete (or dirty) data through the notion of \emph{certifiable robustness}.
A possible world (or cleaned dataset) of an incomplete dataset with missing values can be obtained by imputing each missing value marked as \texttt{NULL} with a valid value. 
Formally, a test point is said to be \emph{certifiably robust} for an ML classifier if its prediction for that test point remains the same regardless of which possible world (among exponentially many) generated from the incomplete dataset it was trained on. 
Otherwise, the test point is said to be \emph{certifiably non-robust}. This notion is helpful because if a test point is certifiably robust, then data cleaning is unnecessary because no matter how we clean the dataset to yield a possible world, the trained classifier will always give the same prediction.
% In Example~\ref{ex:insurance}, the customer with CustomerID 11014 is certifiably robust for the chosen ML classifier over the incomplete dataset, but the customer with CustomerID 11015 is certifiably non-robust.
Therefore, deciding whether this case holds efficiently would be beneficial to save the cost of data cleaning.
We formalize this into the following problem.

\smallskip
\noindent {\sf Decision Problem:} Are given test points certifiably robust for an ML classifier on a given incomplete dataset?

In addition, we are interested in how sensitive the prediction of an ML classifier over a complete dataset is to the presence of missing data. 
We study such sensitivity through the lens of an attack model, in which the attacker attempt to modify the least amount of original cell values to \texttt{NULL} such that the test point becomes certifiably non-robust.
This is formulated into the following problem.

\smallskip
\noindent {\sf Data Poisoning Problem:} Given a number of test points and a clean dataset, what is the minimum number of cells to be modified to \texttt{NULL} such that all test points become certifiably non-robust?
% This problem involves identifying the minimum number of cells in a clean dataset that need to be modified to \texttt{NULL} in order to make one or more test points certifiably non-robust for the ML classifier.
\smallskip

% Efficient algorithms for the \textsf{Decision Problem} can inform us of whether improving data quality is necessary~\cite{klw+20}.
%It is trivially true that any test point is certifiably robust for a given ML model on any complete dataset.
% Solving the \textsf{Data Poisoning Problem} can provide an overview of the impact of data quality on certifiable robustness.
% For example, given a single test point, if the \textsf{Data Poisoning Problem} yields that it is required to poison a minimum of $10\%$ cells from the clean data so that the given test point is certifiably non-robust, we know that as long as the number of dirty cells do not exceed $10\%$ of all cells, the test point will remain certifiably robust even in the precense of such dirty cells. Furthermore, the answer to the \textsf{Data Poisoning Problem} can show whether the ML classifier is sensitive to the data quality.
% To the best of our knowledge, there is no prior work that considers the Data Poisoning Problem for certifiable robustness of ML classifiers.

% how many cells we are required to clean to make the test points satisfy certifiable robustness.

% estimates on the safety of the training dataset from attacks and identify vulnerable datasets to poisoning, a task overlooked by prior work.

While verifying whether a test point is certifiably robust is in general a computationally expensive task since it may require examing exponentially many possible worlds, it is possible to do so efficiently for some commonly used ML models. 
% For example, recent research has demonstrated that it is possible to check whether a test point is certifiably robust for the $k$-Nearest Neighbor Classifier ($k$-NN) on datasets with missing values in polynomial time in the size of the dataset~\cite{klw+20, fk22}.
% Thus, it is natural to consider whether it is possible to extend certifiable robustness to other machine learning models. 
In this paper, we study the certifiable robustness of the Naive Bayes Classifier (NBC), a widely-used model in fields such as software defect prediction~\cite{vs17, aa17}, education~\cite{rss+17}, bioinformatics~\cite{acc+22, wan23, slz+19, yxf+19}, and cybersecurity~\cite{sdh+98, kns17}. Specifically, we focus on handling dirty datasets that contain missing values marked by \texttt{NULL}.

\smallskip
\noindent {\bf Contributions.}
In this paper, we study the two problems (i.e., Decision, and Data Poisoning) related to certifiable robustness for Naive Bayes Classifier (NBC), a simple but powerful supervised learning algorithm for predictive modeling that is widely used in academia and industry. 
Our main contributions are summarized as follows:

%Furthermore, we also consider the dual problem of Q1 here: given a complete dataset, a ML model, and a test point $x_{test}$, how many cells we need to attack to make the test point uncertain robust? 
% And the ML model in this paper is Naive Bayer Classifier.

%\noindent \textbf{Contributions.} Our main contributions are summarized as follows:
%\begin{itemize}
%    \item We develop the algorithms for answering Q1 and Q2. As for Q1, we state an algorithm to answer Q1 by going through the dataset for one time. And for Q2, we give an polynomial algorithm, and present the theoretial analysis to show that Q2 is NP-hard when the dimension of the dataset is not a constant. \Song{We did not present the hardness proof now.}
%    \item We propose the data poisoning problem based on certain predictions. And we present a greedy algorithm and the theoretical analysis shows that the algorithm is able to achieve the optimal solution.
%    \item We conduct extensive experiments over the four real datasets to compare our proposed algorithm with the baseline. The result indicates that our solution outperforms in speed and effectiveness.
%\end{itemize}
\begin{itemize}	
% \vspace{-3mm}
\item We show that the \textsf{Decision Problem} for NBC can be solved in time $O(md + nd)$, where $n$ is the number of data points in the dataset, $m$ is the number of labels, and $d$ is the number of features.
Since the training and classification for NBC also requires $O(md + nd)$ time, our algorithm exhibits no asymptotic overhead and provides a much stronger guarantee of the classification result compared to the original NBC;
% \vspace{-1mm}
% \item We present an algorithm for the \textsf{Counting Problem} for NBC that runs in time $O(dn^{md+1})$. While inefficient in practice, our algorithm does not explicitly enumerate every possible world of the incomplete dataset;  
% \zhiwei{We should mention the practical inefficiency and explain about it, emphasizing the theoretical contribution for this problem.}\xiating{edited.}

\item We show that for a single test point, the \textsf{Data Poisoning Problem} for NBC can be solved in time $O(nmd)$; and for multiple test points, the \textsf{Data Poisoning Problem} is \textbf{NP}-complete for datasets containing at least three features. We also provide an efficient heuristic algorithm for multiple points case.
% \vspace{-1mm}
\item We conduct extensive experiments using ten real-world datasets and demonstrate that our algorithms are far more efficient than baseline algorithms over \textsf{Decision Problem} and \textsf{Data Poisoning Problem}.\footnote{Our implementation is publicly available at \url{https://github.com/Waterpine/NBC-Missing}.}
% \vspace{-2mm}
% exhibit up to $19.5\times$ speed-up over the baseline algorithms for the \textsf{Decision Problem} and $3.06\times$ speed-up for the \textsf{Data Poisoning Problem}.
% \item Based on the experimental results, we identify several future research directions that can help us integrate the algorithms into data cleaning systems.
% \zhiwei{more specific about the speed-up.}\xiating{added.}
\end{itemize}

%\noindent \textbf{Organization.} The rest of this paper is organized as follows. %We introduce the preliminaris in Section~\ref{sec:preliminaries}. Section~\ref{sec:decision} presents the solutions to Q1. As for the Section~\ref{sec:counting}, we give the techniques for Q2. And we present the approaches for data poisoning in Section~\ref{sec:poisoning}. Section~\ref{sec:experiments} shows the experimental results. We discuss the related work in Section~\ref{sec:related}. Finally, we conclude this paper in Section~\ref{sec:conclusion}.
% \noindent \textbf{Organization.} The remainder of this paper is organized as follows. 
% Section~\ref{sec:preliminaries} introduces the notations used in this paper followed by necessary background and formal problem definition. 
% We present the algorithms to solve the \textsf{Decision Problem} for NBC in Section~\ref{sec:decision}. 
% In Section~\ref{sec:poisoning}, we study the \textsf{Data Poisoning Problem} for single test point and multiple test points.
% Section~\ref{sec:experiments} presents the experimental evaluation. 
% We discuss the related work in Section~\ref{sec:related}. Finally, we conclude our paper and discuss the future work in Section~\ref{sec:conclusion}.

%\xiating{The full version of this paper can be accessible at \texttt{a github link}. Since we only have the hardness proof in the appendix, we could also just mention this in the poisoning section instead of introduction.}

\section{Preliminaries}
\label{sec:preliminaries}
% \xiating{well, for ICML, maybe we can omit some?} \Song{I agree with your points.}

\noindent \textbf{Data Model.}
We assume a finite set of domain values for each attribute (or feature). All continuous features are properly discretized into bins/buckets. For simplicity, we assume that every feature shares the same domain ${U}$, which contains a special element \texttt{NULL}. 
The set $\mathcal{X} = {U}^d$ is a feature space of dimension $d$. 
A datapoint $\mathbf{x}$ is a vector in $\mathcal{X}$ and we denote $\mathbf{x}_i$ as the attribute value at the $i$-th position of $\mathbf{x}$. 
We assume a labeling function $\ell: \mathcal{X} \rightarrow \mathcal{Y}$ for a finite set of labels $\mathcal{Y}$. 
A training dataset $\mathcal{D}$ is a set of data points in $\mathcal{X}$, each associated with a label in $\mathcal{Y}$. 

\smallskip
\noindent \textbf{Missing Values and Possible Worlds.}
A data point $\mathbf{x}$ contains missing value if $\mathbf{x}_i = \texttt{NULL}$ for some $i$. 
A dataset $\mathcal{D}$ is complete if it does not contain data points with missing values, or is otherwise incomplete.
We denote an incomplete dataset as $\idata$ and assume that all data points have a label.
For an incomplete dataset $\idata$, a possible world can be obtained by replacing each attribute value marked with \texttt{NULL}  with a domain value that exists in $\idata$. This follows the so-called {\em closed-world semantics} of incomplete data. We denote $\mathcal{P}(\idata)$ as the set of all possible worlds that can be generated from $\idata$. 
%For simplicity, we assume that the domain $U$ is precisely the set of every domain value in $\idata$, and therefore the size of $U$ is at most the number of data points in $\idata$.
Given an ML classifier $f$, we denote $f_{\mathcal{D}}: \mathcal{X} \rightarrow \mathcal{Y}$ as the classifier trained on a complete dataset $\mathcal{D}$ that assigns for each complete datapoint $\test \in \mathcal{X}$, a label $f_{\mathcal{D}}(\test) = l \in \mathcal{Y}$.

\smallskip
\noindent \textbf{Certifiable Robustness.} 
Given an incomplete dataset $\idata$ and an ML classifier $f$, a test point $\test$ is {\em certifiably robust} for $f$ over $\idata$ if there exists a label $l \in \mathcal{Y}$ such that $f_{\mathcal{D}}(\test) = l$ for any possible world $\mathcal{D} \in \mathcal{P}(\idata)$. Otherwise, the test point $\test$ is said to be \emph{certifiably non-robust}.

% If a test point $\test$ is certifiably robust, data cleaning on the incomplete dataset $\idata$ is unnecessary, since regardless of which possible world (or in other words, a clean dataset) we use as the training dataset, the prediction of $\test$ will remain the same. 
% We remark that every test point is vacuously certifiably robust for any deterministic ML classifier over a complete dataset. 
% \smallskip
% \noindent \textbf{Missing Values and Possible Worlds.} In our setting, the training dataset is {\em incomplete} and contains missing values. In particular, if the value of the $i$-th feature of point $\bm{x}$ is missing, we set $x_i = \texttt{NULL}$. We will use $\idata$ to denote such an incomplete dataset.

% To interpret the semantics of missing values, we use the notion of possible worlds. A {\em possible world} $\mathcal{D}$ of an incomplete dataset $\idata$ is a dataset where every $\texttt{NULL}$ in the $i$-th feature is replaced by a value from the active domain of that feature, i.e., $\{x_i \mid \bm{x} \in \mathcal{D}^\square, x_i \neq \texttt{NULL} \}$. In other words, a possible world represents a way to clean the training dataset. We will use $\mathcal{P}(\mathcal{D}^\square)$ to denote the set of all possible worlds that are generated from $\mathcal{D}^\square$.

\smallskip
\noindent \textbf{Naive Bayes Classifier.} Naive Bayes Classifier (NBC) is a simple but widely-used ML algorithm. 
Given a complete dataset $\mathcal{D}$ and a complete datapoint $\test = (x_1, x_2, \dots, x_d)$ to be classified, the datapoint is assigned to the label $l \in \mathcal{Y}$ such that the probability $\prob[l \mid \test]$ is maximized. 
NBC assumes that all features of $\mathcal{X}$ are conditionally independent for each label $l \in \mathcal{Y}$ and estimates, by the Bayes' Theorem, that
\begin{align*}
\prob[l \mid \test] 
%&= \prob[l] \cdot \prob[\test]^{-1} \cdot \prob[x_1, x_2, \dots, x_d \mid l]  \\
&= \prob[l] \cdot \prob[\test]^{-1} \cdot \prob[\test \mid l]  \\
&=  \prob[l] \cdot \prob[\test]^{-1} \cdot \prod_{j=1}^{d} \prob[x_j \mid l].
\end{align*}
% Therefore,
% %
% \begin{align*}
%   \Pr(l \mid x_{1}, x_{2}, \dots, x_{d}) &~\propto \Pr(l, x_{1}, x_{2}, \dots, x_{d}) \\
%   &~\propto \Pr(l) \Pr(x_{1} \mid l) \Pr(x_{2} \mid y) \Pr(x_{d} \mid l) 
% %  &~\propto \Pr(y) \prod_{i=1}^{n} \Pr(x_{i}|y)
% \end{align*}
% where $\propto$ indicates proportionality.
Finally, it assigns for each test point $\test = (x_1, x_2, \dots, x_d)$, the label $l$ that maximizes $\prob[l \mid \test]$.

To estimate the probabilities $\prob[l]$ and $\prob[x_j \mid l]$, NBC uses the corresponding relative frequency in the complete dataset $\mathcal{D}$, which we denote as $\Pr(l)_{\mathcal{D}}$ and $\Pr(x_j \mid l)_{\mathcal{D}}$ respectively.
When a (complete or incomplete) dataset $\mathcal{D}$ of dimension $d$, a test point $\test = (x_1, x_2, \dots, x_d)$ and $m$ labels $l_1, l_2, \dots, l_m$ are understood or clear from context, 
we denote $N_i$ as the \underline{n}umber of data points in $\mathcal{D}$ with label $l_i$, $E_{i,j}$ as the number of \underline{e}xisting data points $\mathbf{x}$ in $\mathcal{D}$ with label $l_i$ and has value $x_j$ as its $j$-th attribute, and $M_{i,j}$ as the number of data points $\mathbf{x}$ in $\mathcal{D}$ with label $l_i$ and has \underline{m}issing value \texttt{NULL} as its $j$-th attribute.
Hence for a complete dataset $\mathcal{D}$ with $n$ datapoints, NBC would first estimates that 
\begin{align*}
\prob[l_i] &~\approx \Pr(l_i)_{\mathcal{D}} = N_i / n \\
\prob[x_j \mid l_i] &~\approx \Pr(x_j \mid l_i)_{\mathcal{D}} = E_{i,j} / N_i
\end{align*}
and then compute a value $\support{\mathcal{D}}{l}{\test}$, called a support value of $\test$ for label $l$ in $\mathcal{D}$ given by
\begin{align*}
\support{\mathcal{D}}{l}{\test} 
&= \Pr(l)_{\mathcal{D}} \cdot \Pr(\test \mid l)_{\mathcal{D}} \\
&= \Pr(l)_{\mathcal{D}} \cdot \prod_{1 \leq j \leq d} \Pr(x_j \mid l)_{\mathcal{D}}.
\end{align*}
Finally, it predicts that
\begin{align*}
  f_{\mathcal{D}} (\test) = \arg\max_{l \in \mathcal{Y}} \quad \support{\mathcal{D}}{l}{\test}
\end{align*}
Note that given a dataset $\mathcal{D}$ and a test point $\test$, the frequencies $N_i$, $E_{i,j}$ and $M_{i,j}$ can all be computed in time $O(nd)$.

For a possible world $\mathcal{D}$ generated from an incomplete dataset $\idata$ and a test point $\test = (x_1, x_2, \dots, x_d)$, we use $\alpha_{i,j}(\mathcal{D})$ to denote the number of data points in $\idata$ with label $l_i$ and value \texttt{NULL} at its $j$-th attribute in $\idata$ and \underline{a}ltered to $x_j$ in $\mathcal{D}$.
It is easy to see that $0 \leq \alpha_{i,j}(\mathcal{D}) \leq M_{i,j}.$

\smallskip
\noindent \textbf{Problem Definitions.} 
For the \textsf{Decision Problem} $\crnbc$, we are interested in deciding whether a test point is certifiably robust for NBC:

% \begin{definition}[Decision Problem]
%   \label{def:decision}
% Given an incomplete dataset $\mathcal{D}^\square$ and a test point $\test$, is $\test$ certifiably robust under the Naive Bayes classifier trained over $\mathcal{D}^\square$? 
% \end{definition}

\begin{quote}
Given an incomplete dataset $\idata$ and a test point $\test$, is $\test$ certifiably robust for NBC over $\idata$?
\end{quote}

As will become apparent in our technical treatments later, we can easily extend our algorithm for $\crnbc$ on a single test point to multiple test points.

Conversely, an adversary may attempt to attack a complete dataset by inserting missing values, or replacing (poisoning) existing values with new ones. 
The data poisoning problem $\poisoncrnbc$ asks for the fewest number of attacks that can make all given test points certifiably non-robust:

% \begin{definition}[Data Poisoning Problem]
%   \label{def:poisoning}
% Given a complete dataset $\mathcal{D}$ and a test point $\test$, find a poisoned instance $\mathcal{D}^\dagger$ of $\mathcal{D}$  that has as few missing values as possible such that the prediction on $\test$ by Naive Bayes classifier trained over $\mathcal{D}^\dagger$ is certifiably non-robust.
% \end{definition}

\begin{quote}
Given a complete dataset $\mathcal{D}$ and test points $\test_1$, $\test_2$, $\dots$, $\test_k$, find a poisoned instance $\mathcal{D}^\dagger$ of $\mathcal{D}$ that has as few missing values as possible such that every $\test_i$ is certifiably non-robust for NBC on $\mathcal{D}^\dagger$.
\end{quote}

% \begin{description}
% \item[Problem:] $\certrobustpoison{f}$ 
% \item[Input:] a complete dataset $\mathcal{D}$ and test points $\test_1$, $\test_2$, $\dots$, $\test_k$
% \item[Question:] find a poisoned instance $\mathcal{D}^\dagger$ of $\mathcal{D}$ that has as few missing values as possible such that every $\test_i$ is certifiably non-robust for $f$ on $\mathcal{D}^\dagger$.
% \end{description}

A solution to $\poisoncrnbc$ provides us with a notion of the robustness of the training dataset against poisoning attacks. For example, if the ``smallest'' poisoned instance has $100$ missing values, this means that an attacker needs to change at least $100$ values of the dataset to alter the prediction of the targeted test point(s). In addition, it also implies that as long as fewer than $100$ cells are poisoned, the prediction of the test point will not be altered even in the presence of missing data.

% \newpage
% \clearpage

\section{Decision Algorithms}
\label{sec:decision}
% \xiating{In the original manuscript, we have one introductory algorithm and one optimized algorithm. This current version on 2/16/2023 only presents the optimized algorithm for simplicity reasons, but there are also reasons to present the introductory algorithm: (1) it is easier for the readers, (2) it takes up more space in the paper (we are currently at 10 pages), and (3) our experiment would have one more baseline algorithm.   
% }

In this section, we give an efficient algorithm for $\crnbc$. Let $\idata$ be an incomplete dataset and let $\test$ be a test point.
For an arbitrary label $l$, consider the maximum and minimum support value of $\test$ for $l$ over all possible world $\mathcal{D}$ of $\idata$. 
We define that 
\begin{align*}
    \maxsupport{\idata}{l}{\test} &~:= \max_{\mathcal{D} \in \mathcal{P}(\idata)} \support{\mathcal{D}}{l}{\test} \\
    \minsupport{\idata}{l}{\test} &~:= \min_{\mathcal{D} \in \mathcal{P}(\idata)} \support{\mathcal{D}}{l}{\test}
\end{align*}

Our algorithm relies on the following observation, which was first studied in~\cite{ramoni2001robust}.

\begin{lemma}
\label{lemma:decision}
A test point $\test$ is certifiably robust for NBC over $\idata$ if and only if there is a label $l$ such that for any label $l' \neq l$, $\minsupport{\idata}{l}{\test} > \maxsupport{\idata}{l'}{\test}.$
\end{lemma}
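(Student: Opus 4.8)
\emph{Proof proposal.} The plan is to reduce the statement to a single structural fact about NBC: for a fixed test point $\test$, the support value $\support{\mathcal{D}}{l_i}{\test}$ depends on the possible world $\mathcal{D} \in \mathcal{P}(\idata)$ only through how the \texttt{NULL}s occurring in the $l_i$-labeled rows of $\idata$ are imputed. Indeed, writing $n$ for the number of rows and recalling $N_i, E_{i,j}, M_{i,j}$ and $0 \le \alpha_{i,j}(\mathcal{D}) \le M_{i,j}$ from \Cref{sec:preliminaries}, one has
\begin{align*}
\support{\mathcal{D}}{l_i}{\test} = \frac{N_i}{n}\prod_{j=1}^{d}\frac{E_{i,j}+\alpha_{i,j}(\mathcal{D})}{N_i},
\end{align*}
because every row carries a known label (so $n$ and $N_i$ are identical in every possible world) and imputing a missing cell can only change the match count $E_{i,j}$ of the label of the row containing that cell. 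Consequently the imputation choices inside the $l_i$-rows and inside the $l_{i'}$-rows ($i' \ne i$) occupy disjoint coordinates of a possible world and may be selected independently. I would make this the opening observation.

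For the ``if'' direction I would argue directly: if a label $l$ satisfies $\minsupport{\idata}{l}{\test} > \maxsupport{\idata}{l'}{\test}$ for all $l' \ne l$, then in every $\mathcal{D} \in \mathcal{P}(\idata)$ and for every $l' \ne l$ we get $\support{\mathcal{D}}{l}{\test} \ge \minsupport{\idata}{l}{\test} > \maxsupport{\idata}{l'}{\test} \ge \support{\mathcal{D}}{l'}{\test}$, so $l$ is the strict maximizer and $f_{\mathcal{D}}(\test) = l$; since $\mathcal{D}$ was arbitrary, $\test$ is certifiably robust. For the ``only if'' direction, let $l$ be the label witnessing certifiable robustness. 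Using the disjointness observation, I would assemble one possible world $\mathcal{D}^\star$ that simultaneously (i) uses, on the $l$-rows, an imputation attaining $\support{\mathcal{D}^\star}{l}{\test} = \minsupport{\idata}{l}{\test}$, and (ii) uses, on the $l'$-rows, an imputation attaining $\support{\mathcal{D}^\star}{l'}{\test} = \maxsupport{\idata}{l'}{\test}$ for each $l' \ne l$ (these choices never collide, and extremal imputations exist since $\mathcal{P}(\idata)$ is finite). If some $l' \ne l$ had $\minsupport{\idata}{l}{\test} \le \maxsupport{\idata}{l'}{\test}$, then $\support{\mathcal{D}^\star}{l}{\test} \le \support{\mathcal{D}^\star}{l'}{\test}$, so $l$ would fail to be the strict maximizer in $\mathcal{D}^\star$, contradicting $f_{\mathcal{D}^\star}(\test) = l$.

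The one real obstacle is that, a priori, $\minsupport{\idata}{l}{\test}$ and $\maxsupport{\idata}{l'}{\test}$ are attained in different possible worlds, so one cannot naively ``combine'' them; this is exactly what the per-label disjointness of imputations repairs, and it is the step I would emphasize. A secondary point to pin down is the tie-breaking convention for $\arg\max$: the argument treats a tie between two labels as rendering $\test$ certifiably non-robust (i.e.\ $f_{\mathcal{D}}(\test) = l$ is read as ``$l$ is the unique maximizer''), which is what makes the strict inequality in the statement the right one; if ties were instead broken by a fixed rule, the ``only if'' direction would only deliver $\ge$.
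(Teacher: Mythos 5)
Your proposal is correct and follows essentially the same route as the paper's proof: the ``if'' direction is the same direct chain of inequalities, and the ``only if'' direction hinges on the same key construction, namely assembling a single hybrid possible world that combines the $\support{}{l}{\test}$-minimizing imputation on the $l$-labeled rows with the $\support{}{l'}{\test}$-maximizing imputation on the $l'$-labeled rows (the paper's $\mathcal{D}^*$), which is legitimate precisely because of the per-label disjointness of imputation choices that you state explicitly and the paper leaves as ``easy to verify.'' Your remark on the tie-breaking convention matches what the paper implicitly assumes when it concludes $f_{\mathcal{D}^*}(\test) \neq l$ from a non-strict inequality.
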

The full proof of Lemma~\ref{lemma:decision} is given in Appendix~\ref{sec:decision-proof-appendix}.

% \begin{quote}
% A test point $\test$ is certifiably robust for NBC over $\idata$ if and only if there is a label $l$ such that for any label $l' \neq l$, $\minsupport{\idata}{l}{\test} > \maxsupport{\idata}{l'}{\test}.$
% \end{quote}

\begin{algorithm}[!ht]
    \caption{Iterative Algorithm for the Decision Problem}\label{alg:improved_decision_miss}
    \KwIn{Incomplete dataset $\idata$, test point $\test$}
    \KwOut{Is $\test$ certifiably robust?}
%$m \leftarrow$ the number of labels in dataset $\mathcal{D}$, $d \leftarrow$ the number of dimension of the data point in $\mathcal{D}$ \\
    % $\mathcal{T}[m][d] \leftarrow [][]$, $\mathcal{E}[m][d] \leftarrow [][]$, $\mathcal{M}[m][d] \leftarrow [][]$, $\mathcal{L}[m] \leftarrow []$, $\mathcal{U}[m] \leftarrow []$ \\
    \ForEach{label $l_i \in \mathcal{Y}$}{
        $N_i \leftarrow$ the number of data points in $\idata$ with label $l_i$ \\
        \For{$j=1$ to $d$}{
            $E_{i,j} \leftarrow$ the number of existing data points $\bm{x}$ in $\idata$ with label $l_i$ and $\bm{x}_j = \test_j$. \\
            $M_{i,j} \leftarrow$ the number of missing data points $\bm{x}$ in $\idata$ with label $l_i$ and $\bm{x}_j = \texttt{NULL}$. \\
            % $U_{i,j} \leftarrow \text{E}_{i,j} + \text{M}_{i,j}$ \\
        }
    }
    \ForEach{label $l_i \in \mathcal{Y}$}{
        $S_{i}^{\downarrow}\leftarrow N_i^{-(d-1)} \cdot \prod_{1 \leq j \leq d} E_{i,j}$ \\
        $S_{i}^{\uparrow} \leftarrow N_i^{-(d-1)} \cdot \prod_{1 \leq j \leq d} (E_{i,j}+M_{i,j})$ \\
    }
    \If{$\exists l_i \in \mathcal{Y}$ such that $S_{i}^{\downarrow} > \max_{j \neq i} S_{j}^{\uparrow}$} {
        \Return true \\
    }
    \Return false
\end{algorithm}

For each label $l_i$, both quantities $\minsupport{\idata}{l_i}{\test}$ and $\maxsupport{\idata}{l_i}{\test}$ can be computed efficiently.
Intuitively, it suffices to inspect only the ``extreme'' possible world that is the worst and best for $l_i$, respectively.
We can achieve this by assigning all the missing cells to disagree or agree with the test point on the corresponding attributes. 

We now formally prove this fact. Fix a possible world $\mathcal{D}$ of $\idata$, and let $E_{i,j}$, $\alpha_{i,j}(\mathcal{D})$ and $N_i$ be relative to $\idata$ and $\test$. We have
\begin{align*}
\support{\mathcal{D}}{l_i}{\test}
&= \Pr(l_i)_{\mathcal{D}} \cdot \prod_{1 \leq j \leq d} \Pr(x_j \mid l_i)_{\mathcal{D}} \\
&= \frac{N_i}{n} \cdot \prod_{1 \leq j \leq d} \frac{E_{i,j} + \alpha_{i,j}(\mathcal{D})}{N_i},
\end{align*}
which is maximized when every $\alpha_{i,j}(\mathcal{D}) = M_{i,j}$ and minimized when every $\alpha_{i,j}(\mathcal{D}) = 0$. 
Moreover, both extreme values are attainable. 
Hence 
\begin{equation}
\label{eq:upperbound}
\maxsupport{\idata}{l_i}{\test} = \frac{N_i}{n} \cdot \prod_{1 \leq j \leq d} \frac{E_{i,j} + M_{i,j}}{N_i}
\end{equation}
\begin{equation}
\label{eq:lowerbound}
\minsupport{\idata}{l_i}{\test} = \frac{N_i}{n} \cdot \prod_{1 \leq j \leq d} \frac{E_{i,j}}{N_i}.
\end{equation}

Our algorithm is presented in Algorithm~\ref{alg:improved_decision_miss}. We first compute, for each label $l$, Eq.~(\ref{eq:upperbound}) and~(\ref{eq:lowerbound}) in line 1--9, and then check whether there is some label $l_i$ such that for any label $l_j \neq l_i$, 
\begin{equation}
\minsupport{\idata}{l_i}{\test} > \maxsupport{\idata}{l_j}{\test},
\end{equation}
where a strict inequality is required to return true.

\smallskip
\noindent \textbf{Running Time.} 
Line $1$--$5$ runs in time $O(nd)$, where $n$ is the number of points in $\mathcal{D}$, and $d$ is the number of dimension of points in the dataset. Line $6$--$8$ takes $O(md)$ time, where $m$ is the number of labels in the dataset. 
Line $9$--$11$ can be implemented in $O(m)$, by first computing each $\max_{j \neq i} S_j^{\uparrow}$ in $O(m)$ and then iterating through every $l_i$. In conclusion, the total time complexity of Algorithm~\ref{alg:improved_decision_miss} is $O(nd)$.

\smallskip
\noindent \textbf{Extension to Multiple Test Points.} 
Given $k$ test points $\test_1, \test_2, \dots, \test_k$, a trivial yet inefficient way is to run Algorithm~\ref{alg:improved_decision_miss} for every test point, giving a running time of $O(knd)$.
We show that Algorithm~\ref{alg:improved_decision_miss} can be easily adapted to check whether multiple test points are all certifiably robust efficiently with the help of an index in time $O(nd + kmd)$:
We can modify line $1$--$5$ to compute an index $E_{i,j}[x_j]$, which represents the number of existing data points $\bm{x}$ in $\idata$ with label $l_i$ and value $\mathbf{x}_j$ in $\idata$, instead of $\test_j$ in the original algorithm. This runs in $O(nd)$ time and requires $O(md)$ space. Then we iterate line $6$--$11$ for each of the $k$ test points, where in each iteration, both quantities in line $7$--$8$ can be efficiently computed by checking the index $E_{i,j}[x_j]$ in $O(md)$ time. The overall running time is thus $O(nd + kmd)$.

\section{Data Poisoning Algorithms}
\label{sec:poisoning}

% The previous sections present algorithms for deciding whether a test point is certifiably robust for NBC when the dataset could contain missing cells. 
In this section, we consider the setting where an attacker attempts to poison a complete dataset by injecting some missing cells so that a given set of test points becomes certifiably non-robust. From this setting, we can better comprehend if certifiable robustness is a stringent requirement within specific models. 
We first present a simple greedy algorithm that solves $\poisoncrnbc$ optimally for a single test point in Section~\ref{sec:single}.
We then show that $\poisoncrnbc$ is \textbf{NP}-complete for multiple test points and provide an efficient heuristic algorithm in Section~\ref{sec:multiple}.

% Formally, we consider the following data poisoning problem for Naive Bayes Classifier:

% \begin{description}
% 	\item[Problem] \textsc{DataPoisoning}
% 	\item[Input] a complete dataset $\mathcal{D}$, a test point $\test$
% 	\item[Output] an incomplete dataset $\mathcal{D}^{\dagger}$, obtained by setting the minimum number of cells in $\mathcal{D}$ to \texttt{NULL} so that the test point $\test$ is not certifiably robust in $\mathcal{D}^{\dagger}$ for the Naive Bayes Classifier 
% \end{description}

\subsection{A Single Test Point}
\label{sec:single}

Let us first consider a simpler problem that does not involve missing values and considers only a single test point. 

\begin{definition}[AlterPrediction]
Given a complete dataset $\mathcal{D}$, a test point $\test$ with $l^* = f_{\mathcal{D}}(\test)$, a label $l \neq l^*$, the \textsf{AlterPrediction} problem is to find a complete dataset $\mathcal{D}'$, obtained by altering the minimum number of cells in $\mathcal{D}$ so that $\support{\mathcal{D}'}{l}{\test} > \support{\mathcal{D}'}{l^*}{\test}.$
\end{definition}

% \begin{description}
% 	\item[Problem] \textsf{AlterPrediction}
% 	\item[Input] a complete dataset $\mathcal{D}$, a test point $\test$ with $l^* = f_{\mathcal{D}}(\test)$, a label $l \neq l^*$
% 	\item[Output] a complete dataset $\mathcal{D}'$, obtained by altering the minimum number of cells in $\mathcal{D}$ so that 
% 	$$\support{\mathcal{D}'}{l}{\test} > \support{\mathcal{D}'}{l^*}{\test}.$$
% \end{description}

We show that an algorithm for \textsf{AlterPrediction} immediately leads to an algorithm for $\poisoncrnbc$.

% It is easy to see that if $\mathcal{D}'$ is a solution to the \textsf{AlterPrediction} problem, then $f_{\mathcal{\mathcal{D}'}}(\test) \neq l^*$. 
% Therefore, the the test point $\test$ is certifiably non-robust for NBC over the incomplete dataset $\idata$, obtained by setting the altered cells in $\mathcal{D}'$ to \texttt{NULL}, because both $\mathcal{D}$ and $\mathcal{D}'$ are possible worlds of $\idata$ and $f_{\mathcal{D}}(\test) = l^* \neq f_{\mathcal{D}'}(\test)$.
% For the other direction, if $\idata$ is a solution to the $\poisoncrnbc$ problem, then there is a possible world $\mathcal{D}'$ of $\idata$ such that $l = f_{\mathcal{D}'}(\test) \neq l^* = f_{\mathcal{D}}(\test)$, that is,
% $$S(l \mid \test)_{\mathcal{D}'} > S(l^* \mid \test)_{\mathcal{D}'}.$$
% Hence $\mathcal{D}'$ must also be a solution to \textsf{AlterPrediction}, with input $\mathcal{D}$, $\test$ and $l = f_{\mathcal{D}'}(\test)$.

\begin{lemma}
\label{lemma:poisoning-complete}
Let $\mathcal{D}$ be a complete dataset and $\test$ a test point. Let $l^* = f_{\mathcal{D}}(\test)$. Let $k$ be an integer. Then the following statements are equivalent:
\begin{enumerate}
	% \vspace{-3mm}
	\item There exists a solution $\mathcal{D}^{\dagger}$ for $\poisoncrnbc$ for a complete dataset $\mathcal{D}$ and a test point $\test$ with $k$ altered cells.
	% \vspace{-3mm}
	\item There exists a label $l\neq l^*$ and a solution $\mathcal{D}'$ for \textsf{AlterPrediction} for a complete dataset $\mathcal{D}$, a test point $\test$ and label $l^*$ with $k$ altered cells. 
	% \vspace{-3mm}
\end{enumerate}
\end{lemma}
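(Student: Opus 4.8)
The plan is to derive both implications from Lemma~\ref{lemma:decision} together with the closed forms~\eqref{eq:upperbound} and~\eqref{eq:lowerbound}, using the following correspondence: a set $C$ of cells of $\mathcal{D}$ can be used either to \emph{poison} $\mathcal{D}$ (replace the cells of $C$ by \texttt{NULL}, giving $\mathcal{D}^\dagger$) or to \emph{alter} $\mathcal{D}$ (overwrite the cells of $C$ with fresh values, giving some complete $\mathcal{D}'$). The fact linking the two is that, for a fixed $C$, as $\mathcal{D}$ ranges over the possible worlds of $\mathcal{D}^\dagger$ the support $\support{\mathcal{D}}{l_i}{\test}$ of each label $l_i$ sweeps exactly the interval $[\minsupport{\mathcal{D}^\dagger}{l_i}{\test},\maxsupport{\mathcal{D}^\dagger}{l_i}{\test}]$, these intervals being independent across labels since the missing cells of distinct labels lie in disjoint rows; the two endpoints are attained precisely by resolving the cells of $C$ in label-$l_i$ rows so that they agree (resp.\ disagree) with $\test$ on the relevant coordinate, and in particular $\support{\mathcal{D}}{l^*}{\test}$ lies in label $l^*$'s interval.

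For $(2)\Rightarrow(1)$, let $\mathcal{D}'$ be an \textsf{AlterPrediction} solution with target $l\neq l^*$ and altered-cell set $C$, $|C|\le k$, so $\support{\mathcal{D}'}{l}{\test}>\support{\mathcal{D}'}{l^*}{\test}$, and let $\mathcal{D}^\dagger$ be $\mathcal{D}$ with the cells of $C$ nulled. Since $\mathcal{D}$, $\mathcal{D}^\dagger$, and $\mathcal{D}'$ coincide outside $C$, elementary counting gives coordinatewise $E^\dagger_{l^*,j}\le E'_{l^*,j}$ and $E'_{l,j}\le E^\dagger_{l,j}+M^\dagger_{l,j}$, whence by~\eqref{eq:lowerbound} and~\eqref{eq:upperbound} $\minsupport{\mathcal{D}^\dagger}{l^*}{\test}\le\support{\mathcal{D}'}{l^*}{\test}<\support{\mathcal{D}'}{l}{\test}\le\maxsupport{\mathcal{D}^\dagger}{l}{\test}$, so $l$ witnesses the failure of the condition of Lemma~\ref{lemma:decision} at $l^*$; and for every other label $l''\neq l^*$ the analogous comparisons against $\mathcal{D}$ give $\minsupport{\mathcal{D}^\dagger}{l''}{\test}\le\support{\mathcal{D}}{l''}{\test}\le\support{\mathcal{D}}{l^*}{\test}\le\maxsupport{\mathcal{D}^\dagger}{l^*}{\test}$ (the middle step because $l^*=f_{\mathcal{D}}(\test)$), so $l^*$ witnesses the failure at $l''$. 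Hence no label meets the condition of Lemma~\ref{lemma:decision}, i.e.\ $\test$ is certifiably non-robust on $\mathcal{D}^\dagger$, which has at most $k$ missing cells.

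For $(1)\Rightarrow(2)$, let $\mathcal{D}^\dagger$ be a poisoning solution with missing-cell set $C$, $|C|=k$. As $\test$ is non-robust, Lemma~\ref{lemma:decision} applied at $l=l^*$ gives $l'\neq l^*$ with $\minsupport{\mathcal{D}^\dagger}{l^*}{\test}\le\maxsupport{\mathcal{D}^\dagger}{l'}{\test}$. Build $\mathcal{D}'$ from $\mathcal{D}$ by resolving each cell of $C$ at coordinate $j$: set it to $\test_j$ if its row has label $l'$, to some value $\neq\test_j$ occurring in column $j$ of $\mathcal{D}^\dagger$ if its row has label $l^*$, and to an arbitrary value occurring in column $j$ of $\mathcal{D}^\dagger$ otherwise. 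Then $\mathcal{D}'$ is complete, is a possible world of $\mathcal{D}^\dagger$, differs from $\mathcal{D}$ in at most $k$ cells, and $\support{\mathcal{D}'}{l'}{\test}=\maxsupport{\mathcal{D}^\dagger}{l'}{\test}\ge\minsupport{\mathcal{D}^\dagger}{l^*}{\test}=\support{\mathcal{D}'}{l^*}{\test}$ by the endpoint characterization.

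The main obstacle is exactly this last line: \textsf{AlterPrediction} requires the \emph{strict} inequality $\support{\mathcal{D}'}{l'}{\test}>\support{\mathcal{D}'}{l^*}{\test}$, whereas Lemma~\ref{lemma:decision} only supplies the non-strict $\minsupport{\mathcal{D}^\dagger}{l^*}{\test}\le\maxsupport{\mathcal{D}^\dagger}{l'}{\test}$ — the gap concerns exact ties in support. I plan to close it under the standing convention that $f_{\mathcal{D}}$ breaks ties in favour of the incumbent prediction (so $l^*$ is displaced only when some label \emph{strictly} exceeds it) — equivalently, a generic-position assumption that support values are pairwise distinct — under which non-robustness forces a possible world in which some $l'\neq l^*$ strictly beats $l^*$, so the construction above applied to that world yields the strict inequality with the same $C$; the only exceptions are degenerate worlds in which all supports vanish, which are handled separately. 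Matching the cell budget exactly in both directions (padding $|C|$ up to $k$ by nulling further cells, which only shrinks each $\minsupport{\mathcal{D}^\dagger}{\cdot}{\test}$ and enlarges each $\maxsupport{\mathcal{D}^\dagger}{\cdot}{\test}$) is routine, and closed-world semantics causes no trouble since $(1)\Rightarrow(2)$ only produces possible worlds of $\mathcal{D}^\dagger$ while $(2)\Rightarrow(1)$ reasons solely through~\eqref{eq:upperbound} and~\eqref{eq:lowerbound}.
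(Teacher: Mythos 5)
Your $(2)\Rightarrow(1)$ direction is correct, though it takes a more quantitative route than the paper: you go through Lemma~\ref{lemma:decision} and the closed forms~\eqref{eq:upperbound}--\eqref{eq:lowerbound} to show that no label can satisfy the strict min/max condition, whereas the paper simply observes that $\mathcal{D}$ and $\mathcal{D}'$ are both possible worlds of $\mathcal{D}^{\dagger}$ with different predictions. Your version has the side benefit of not needing $\mathcal{D}'$ to literally be a possible world of $\mathcal{D}^{\dagger}$ under the closed-world semantics, but it is not where the difficulty lies.

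The genuine gap is in $(1)\Rightarrow(2)$, and it is precisely the one you flag: routing through Lemma~\ref{lemma:decision} only yields the non-strict inequality $\minsupport{\mathcal{D}^{\dagger}}{l^*}{\test}\le\maxsupport{\mathcal{D}^{\dagger}}{l'}{\test}$, hence $\support{\mathcal{D}'}{l'}{\test}\ge\support{\mathcal{D}'}{l^*}{\test}$ for your extremal world, while \textsf{AlterPrediction} demands strictness. Your proposed repair --- a tie-breaking convention or genericity assumption, plus a separate treatment of all-zero supports --- imports hypotheses the lemma does not have, and the phrase ``the construction above applied to that world'' does not quite cohere (once you have a possible world in which some $l'$ strictly beats $l^*$, no further construction is needed). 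The clean fix, which is the paper's actual argument, bypasses Lemma~\ref{lemma:decision} entirely: since $\mathcal{D}$ is itself a possible world of $\mathcal{D}^{\dagger}$ with $f_{\mathcal{D}}(\test)=l^*$, certifiable non-robustness directly supplies another possible world $\mathcal{D}'$ with $f_{\mathcal{D}'}(\test)=l\neq l^*$, i.e.\ $\support{\mathcal{D}'}{l}{\test}>\support{\mathcal{D}'}{l^*}{\test}$; this $\mathcal{D}'$ differs from $\mathcal{D}$ only on the $k$ nulled cells and is already the required \textsf{AlterPrediction} solution. No extremal world, no tie analysis, and the cell budget matches by construction. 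You should replace your $(1)\Rightarrow(2)$ argument with this direct one rather than trying to patch the strictness gap.
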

The full proof of Lemma~\ref{lemma:poisoning-complete} is in Appendix~\ref{sec:poisoning-complete-appendix}

The outline of our algorithm for $\poisoncrnbc$ on a single test point is presented in Algorithm~\ref{alg:single-poisoning}. 
It thus remains to solve \textsf{AlterPrediction} efficiently.
% In the remainder of this subsection, we present a simple greedy algorithm that can solve \textsf{AlterPrediction} optimally. 

\begin{algorithm}[!ht]
\caption{$\poisoncrnbc$-Single}
\label{alg:single-poisoning}
\KwIn{A complete dataset $\mathcal{D}$, a test point $\test$}
\KwOut{An incomplete dataset $\mathcal{D}^{\dagger}$ obtained by setting the minimum number of cells in $\mathcal{D}$ to \texttt{NULL} such that $\test$ is not certifiably-robust for NBC }
$l^* = f_{\mathcal{D}}(\test)$\\
$\mathsf{minAlter} \leftarrow \infty$, $\mathcal{D}^{\dagger} \leftarrow \emptyset$ \\
\ForEach{$l$ in $\mathcal{Y} \setminus \{l^*\}$}{
    $\mathcal{D}_l \leftarrow \textsf{AlterPrediction}(\mathcal{D}, \test, l)$ \\
    $\mathsf{alter} \leftarrow$ the number of altered cells in $\mathcal{D}_l$ w.r.t.\ $\mathcal{D}$ \\
    \If{$\mathsf{alter} < \mathsf{minAlter}$}{
            $\mathsf{minAlter} \leftarrow \mathsf{alter}$ \\
            $\mathcal{D}^{\dagger} \leftarrow$ set all altered cells in $\mathcal{D}_l$ to \texttt{NULL}
    }
    
}
\Return $\mathcal{D}^{\dagger}$
\end{algorithm}

\smallskip
\noindent\textbf{An Optimal Greedy Algorithm for \textsf{AlterPrediction}}

Our \textsf{AlterPrediction} algorithm is presented in Algorithm~\ref{alg:alter-prediction}.
At a high level, it reduces the objective quantity $\Delta$ (at line $2$), by choosing (at lines $17$-$20$) the better operation of the two, one considered in lines $3$-$9$, and the other in lines $10$-$16$.

\noindent\textbf{Step 1.} Let us explain first on why the quantity $\Delta$ is crucial to the correctness of this algorithm.

Let $\mathcal{D}$, $\test$, $l^* = f_{\mathcal{D}}(\test)$ and $l\neq l^*$ be input to the \textsf{AlterPrediction} problem, where $\mathcal{D}$ is an incomplete dataset and $\test$ is a test point.

% \subsubsection{\textsf{AlterPrediction} as a Minimization Problem.}
We consider an iterative process where at each step, only one single cell is altered: We start from $\mathcal{D}_0 = \mathcal{D}$ and alter one single cell in $\mathcal{D}_0$ and obtain $\mathcal{D}_1$. We then repeat the same procedure on $\mathcal{D}_1$, until we stop at some $\mathcal{D}_k = \mathcal{D}'$.
Fix a label $l \neq l^*$. In the original dataset $\mathcal{D}$, the test point $\test = (x_1, x_2, \dots, x_n)$ is predicted $l^*$, and thus we have necessarily
$$\support{\mathcal{D}}{l^*}{\test} > \support{\mathcal{D}}{l}{\test},$$
or equivalently
$$ \Pr(l^*)_{\mathcal{D}_0} \cdot \Pr(\test \mid l^*)_{\mathcal{D}_0} - \Pr(\test \mid l)_{\mathcal{D}_0} \cdot \Pr(l)_{\mathcal{D}_0} > 0$$
If the ending dataset $\mathcal{D}' = \mathcal{D}_k$ is a solution to \textsf{AlterPrediction}, we have
$$\support{\mathcal{D}'}{l^*}{\test} < \support{\mathcal{D}'}{l}{\test}$$
or equivalently,
$$ \Pr(l^*)_{\mathcal{D}_k} \cdot \Pr(\test \mid l^*)_{\mathcal{D}_k} - \Pr(\test \mid l)_{\mathcal{D}_k} \cdot \Pr(l)_{\mathcal{D}_k} < 0.$$

We define the quantity 
$$\Delta_i := \Pr(\test \mid l^*)_{\mathcal{D}_i} \cdot \Pr(l^*)_{\mathcal{D}_i} - \Pr(\test \mid l)_{\mathcal{D}_i} \cdot \Pr(l)_{\mathcal{D}_i}$$
It is easy to see that the \textsf{AlterPrediction} problem is equivalent to finding a smallest $k$ and a sequence $\{\mathcal{D}_i\}_{0 \leq i \leq k}$ such that $\Delta_i > 0$ for $0 \leq i < k$ and $\Delta_k < 0$, which can be solved by finding for a fixed $k$, the dataset $\mathcal{D}_k$ with the smallest possible $\Delta_k$. 

\noindent\textbf{Step 2.} We now show that $\Delta$ can be reduced to negative using the fewest number of steps by performing only A1 or only A2 on the dataset $\mathcal{D}_i$ iteratively, where

\begin{enumerate}
\item[A1:] Alter datapoints with label $l$ in $\mathcal{D}_i$ so that $ \Pr(\test \mid l)_{\mathcal{D}_i}$ increases the most; and 
\item[A2:] Alter datapoints with label $l^*$ in $\mathcal{D}_i$ so that $ \Pr(\test \mid l^*)_{\mathcal{D}_i}$ decreases the most. 
\end{enumerate}

An example is illustrated in Example~\ref{ex:strategy}.

Note that $\Pr(l)_{\mathcal{D}_i}$ remains the same for any label $l$. 
Indeed, altering cells in points with labels not in $\{l, l^*\}$ does not change the value of neither $ \Pr(l\mid \test)_{\mathcal{D}_i}$ nor $ \Pr(l^* \mid \test)_{\mathcal{D}_i}$.
Besides, altering cells so that either $ \Pr(l^* \mid \test)_{\mathcal{D}_i}$ increases or $ \Pr(l \mid \test)_{\mathcal{D}_i}$ decreases is not helpful, because we would waste one alternation by increasing $\Delta_i$, not decreasing it.

Note that for any label $l$ and a test point $\test=(x_1, x_2, \dots, x_d)$, 
\begin{equation}
\label{eq:product}
\Pr(\test \mid l)_{\mathcal{D}_i} = \prod_{1 \leq j \leq d} \Pr(x_j \mid l)_{\mathcal{D}_i} = \prod_{1 \leq j \leq d} \frac{E_j}{N},
\end{equation}
where $N$ is the number of data points in $\mathcal{D}_i$ with label $l$ and $E_j$ is the number of data points in $N$ with label $l$ whose $j$-th attribute agrees with $x_j$. 
Note that $N$ is fixed, hence to make the largest increase or decrease in A1 or A2, we would always alter datapoints in $\mathcal{D}_i$ with the smallest nominator $E_j$ in $ \Pr(x_j\mid l )_{\mathcal{D}_i}$ so that the number of data points in $\mathcal{D}_{i+1}$ with label $l$ value $x_j$ as $j$-th attribute increases or decreases by $1$.

The following observations behind the strategies are crucial, which we formally prove in Section~\ref{sec:observation-args} in Appendix~\ref{sec:poisoning-appendix}. An example is provided in Example~\ref{ex:same-step}.
\begin{enumerate}
\item[O1:] Whenever we apply A1 to decrease $\Delta_i$, the reduction in $\Delta_i$ is non-decreasing in the step $i$.
\item[O2:] Whenever we apply A2 to decrease $\Delta_i$, the reduction in $\Delta_i$ remains the same across every step $i$.
\item[O3:] The reduction in $\Delta_i$ obtained by applying A1 (or A2) only depends on the number of times A1 (or A2) has been applied to obtain $\mathcal{D}_i$ previously. 
\end{enumerate}

Our key result is summarized by the following Lemma, which we prove in Section~\ref{sec:proof-key-lemma} in Appendix~\ref{sec:poisoning-appendix}.

\begin{lemma}
\label{lemma:key}
Let $\delta^+_{i}$ be the decrease in $\Delta_j$ whenever $\mathcal{D}_{j}$ is obtained by applying A1 in the sequence for the $i$-th time, implied by O1 and O3.
Let $\delta^-$ be the fixed decrease in $\Delta_i$ whenever we apply A2 to $\mathcal{D}_i$, implied by O2 and O3. 
Then 
\begin{equation}
\label{eq:goal}
\Delta_k \geq \Delta_0 - \max\{k\cdot \delta^-, \sum_{1 \leq j \leq k} \delta^+_j\},
\end{equation}
and equality is attainable.
\end{lemma}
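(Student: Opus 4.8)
\textbf{Proof Proposal for Lemma~\ref{lemma:key}.}

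The plan is to disentangle the two families of operations and show that, once we have decided to apply A1 a total of $a$ times and A2 a total of $b$ times to obtain $\mathcal{D}_k$ (so $a+b=k$), the \emph{order} in which these operations are interleaved is irrelevant to the final value of $\Delta_k$, and then optimize over the split $(a,b)$. The key enabling facts are O1--O3. By O3, the decrease contributed by the $i$-th application of A1 is a fixed number $\delta^+_i$ that does not depend on how many A2 steps have occurred in between, and likewise each application of A2 contributes the same fixed decrease $\delta^-$ regardless of interleaving. Hence for any sequence realizing the split $(a,b)$ we have the exact identity
\begin{equation*}
\Delta_k = \Delta_0 - \Bigl(\sum_{1 \leq i \leq a} \delta^+_i + b \cdot \delta^-\Bigr),
\end{equation*}
where I would first justify this telescoping carefully: write $\Delta_k = \Delta_0 - \sum_{i=0}^{k-1}(\Delta_i - \Delta_{i+1})$, classify each term according to whether step $i{+}1$ was an A1 or A2 move, and invoke O3 to replace the $r$-th A1 term by $\delta^+_r$ and every A2 term by $\delta^-$. (One should also confirm that A1 and A2 genuinely do not interfere: altering $l$-labeled points changes only $\Pr(\test\mid l)$, altering $l^*$-labeled points changes only $\Pr(\test\mid l^*)$, and $\Pr(l)_{\mathcal{D}_i}$, $\Pr(l^*)_{\mathcal{D}_i}$, $N$ are untouched throughout, as already noted in Step~2. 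This decoupling is what makes the two sums additive and order-free.)

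Next I would establish the inequality. For any valid sequence of length $k$ with split $(a,b)$, $a+b=k$, and since $\delta^+_i \geq 0$ and $\delta^- \geq 0$ (each operation decreases $\Delta$, by construction of A1 and A2), we have
\begin{equation*}
\sum_{1 \leq i \leq a} \delta^+_i + b\cdot \delta^- \;\leq\; \max\Bigl\{\sum_{1 \leq i \leq k}\delta^+_i,\; k\cdot\delta^-\Bigr\},
\end{equation*}
because $\sum_{1\le i\le a}\delta^+_i \le \sum_{1\le i\le k}\delta^+_i$ and $b\cdot\delta^- \le k\cdot\delta^-$, and a convex combination (or here just a sum of a prefix of one nonnegative sequence plus a truncation of the other) of two quantities is at most their maximum — more precisely, using O1 (the $\delta^+_i$ are non-decreasing) one sees the worst mixed sum is dominated by putting all $k$ steps into whichever pure strategy has the larger total. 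Substituting into the identity above gives exactly Eq.~(\ref{eq:goal}).

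Finally, for attainability of equality, I would exhibit the pure strategies: if $\sum_{1\le i\le k}\delta^+_i \geq k\cdot\delta^-$, run A1 for all $k$ steps, yielding $\Delta_k = \Delta_0 - \sum_{1\le i\le k}\delta^+_i$; otherwise run A2 for all $k$ steps, yielding $\Delta_k = \Delta_0 - k\cdot\delta^-$. In either case the bound is met with equality. The main obstacle I anticipate is the rigorous justification of the order-independence claim — i.e.\ that $\delta^+_i$ really depends only on the count of prior A1 applications and not on the surrounding A2 applications (and symmetrically for $\delta^-$). This is precisely the content of O1--O3, so the bulk of the real work lives in those observations (proved in Appendix~\ref{sec:poisoning-appendix}); given them, the lemma is a short combinatorial argument about sums of two independent nonnegative sequences. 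A secondary point to handle with care is that A1's marginal effect is a decrease (so that O1's "non-decreasing reduction" is consistent with $\Delta$ heading toward $0$ from above), and that we never need to consider mixed-in wasteful moves, which was already argued in Step~2.
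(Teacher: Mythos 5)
Your proposal is correct and follows essentially the same route as the paper: use O3 to write $\Delta_k = \Delta_0 - \bigl(\sum_{i \le a}\delta^+_i + b\,\delta^-\bigr)$ independently of the interleaving, then use O1 to argue that this mixed total is dominated by one of the two pure strategies, with equality attained by running whichever pure strategy is better. One caution: your first justification of the domination step (that each partial sum is bounded by the corresponding full sum) is insufficient on its own and would fail without monotonicity of the $\delta^+_i$; the real argument is the one you gesture at afterwards via O1 --- the increments $\delta^+_{a+1}-\delta^-$ of the mixed total are non-decreasing in $a$, so the total is maximized at $a=0$ or $a=k$ --- which the paper makes explicit by introducing the threshold $k^*$ (the largest index with $k^*\delta^- \ge \sum_{j\le k^*}\delta^+_j$) and splitting into the cases $i^+ \le k^*$ and $i^+ > k^*$.
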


Lemma~\ref{lemma:key} means that among all possible datasets $\mathcal{D}_k$ obtained by $k$ alternations to $\mathcal{D}$, the dataset with the smallest $\Delta_k$ can be obtained by either applying only A1 or only A2.

% \xiating{The intuition is that, the increase in $Pr(\test \mid l)_{\mathcal{D}}$ would grow when (1) is repeatedly applied, but the decrease in $Pr(\test \mid l^*)_{\mathcal{D}}$ is fixed.}
% We provide the formal arguments in Appendix~\ref{sec:greedy-is-correct}.

Algorithm~\ref{alg:alter-prediction} implements this idea, in which line 3--9 computes the poisoned dataset $\mathcal{D}^+$ with minimum number $k^+$ of alternations to $\mathcal{D}$ by applying A1. Similarly, line 10--16 computes $\mathcal{D}^-$ and $k^-$ respectively for repeatedly applying A2 to $\mathcal{D}$.

\begin{algorithm}[!ht]
\caption{\textsf{AlterPrediction}}
\label{alg:alter-prediction}
\KwIn{A complete dataset $\mathcal{D}$, test point $\test$, and a label $l \neq f_{\mathcal{D}}(\test)$}
\KwOut{A complete dataset $\mathcal{D}'$ with minimum number of altered cells in $\mathcal{D}$ such that $S(l^* \mid \test)_{\mathcal{D}'} < S(l \mid \test)_{\mathcal{D}'}$}
$l^* \leftarrow f_{\mathcal{D}}(\test)$ \\
% $\mathcal{D}' = \mathcal{D}$ \\
$\Delta = \Pr(\test \mid l^*)_{\mathcal{D}} \cdot \Pr(l^*)_{\mathcal{D}} - \Pr(\test \mid l)_{\mathcal{D}} \cdot \Pr(l)_{\mathcal{D}}$ \\
$\Delta^+ = \Delta$, $k^+ = 0$, $\mathcal{D}^+ = \mathcal{D}$ \\
\While{$\Delta^+ > 0$}{
	$j \leftarrow$ an attribute index with the smallest nominator in $\Pr(\test_j \mid l)_{\mathcal{D^+}}$\\
	$p^{+} \leftarrow$ a point in $\mathcal{D}^+$ with label $l$ and $p^+_{j} \neq x_{j}$ \\
	$\mathcal{D}^{+} \leftarrow $ set $p^+_{j}$ as $x_{j}$ \\
	$\Delta^{+} = \Pr(\test \mid l^*)_{\mathcal{D}^+} \cdot \Pr(l^*)_{\mathcal{D}^+} - \Pr(\test \mid l)_{\mathcal{D}^+} \cdot \Pr(l)_{\mathcal{D}^+}$ \\
	$k^+ \leftarrow k^+ + 1$\\
}
$\Delta^- = \Delta$, $k^- = 0$, $\mathcal{D}^- = \mathcal{D}$ \\
\While{$\Delta^- > 0$}{
	$j \leftarrow $ an attribute index with the smallest nominator in $\Pr(\test_j \mid l^*)_{\mathcal{D^-}}$ \\
	$p^{-} \leftarrow$ a point in $\mathcal{D}^-$ with label $l^*$ and $p^-_{j} = x_{j}$ \\
	$\mathcal{D}^{-} \leftarrow $ set $p^-_{j}$ as $c$ where $c \neq x_{j}$ \\
	$\Delta^{-} = \Pr(\test \mid l^*)_{\mathcal{D}^-} \cdot \Pr(l^*)_{\mathcal{D}^-} - \Pr(\test \mid l)_{\mathcal{D}^-} \cdot \Pr(l)_{\mathcal{D}^-}$ \\
	$k^- \leftarrow k^- + 1$\\
}
\If{$k^- < k^+$}{
	\Return $\mathcal{D}^-$
}
\Else{
	\Return $\mathcal{D}^+$
}

% \While{$\Delta > 0$}{
% 	$A_{i} \leftarrow$ the attribute with the smallest $N(\test_{i} \land l)$ in $\mathcal{D}'$ \\
% 	$x^{+} \leftarrow$ a point in $\mathcal{D}'$ with label $l$ and $x^+_{i} \neq \test_{i}$ \\
% 	$\mathcal{D}^{+} \leftarrow $ set $x^+_{i}$ as $\test_{i}$ \\
% 	$\Delta^{+} = Pr(\test \mid l^*)_{\mathcal{D}^+} \cdot Pr(l^*)_{\mathcal{D}^+} - Pr(\test \mid l)_{\mathcal{D}^+} \cdot Pr(l)_{\mathcal{D}^+}$ \\
% 	$A_{i^*} \leftarrow $ the attribute with the smallest $N(\test_{i^*} \land l^*)$ in $\mathcal{D}'$ \\
% 	$x^{-} \leftarrow$ a point in $\mathcal{D}'$ with label $l^*$ and $x^-_{i^*} = \test_{i^*}$ \\
% 	$\mathcal{D}^{-} \leftarrow $ set $x^-_{i^*}$ as $c$ where $c \neq \test_{i^*}$ \\
% 	$\Delta^{-} = Pr(\test \mid l^*)_{\mathcal{D}^-} \cdot Pr(l^*)_{\mathcal{D}^-} - Pr(\test \mid l)_{\mathcal{D}^-} \cdot Pr(l)_{\mathcal{D}^-}$ \\
	
% 	\If{$\Delta^- < \Delta^+$}{
% 		$\Delta = \Delta^-$ \\
% 		$\mathcal{D}' \leftarrow \mathcal{D}^-$
% 	}
% 	\Else{
% 		$\Delta = \Delta^+$ \\
% 		$\mathcal{D}' \leftarrow \mathcal{D}^+$
% 	}
	
% }
% \Return $\mathcal{D}'$
\end{algorithm}
% \vspace{-7mm}

\noindent \textbf{Running Time.} 
For Algorithm~\ref{alg:alter-prediction}, each \texttt{while} loop will be executed at most $n$ times. 
Lines $5$--$9$ and $12$--$16$ can be completed in constant time by preprocessing the table while executing line 1.
Hence Algorithm~\ref{alg:alter-prediction} runs in $O(nd + md)$ time. 
Algorithm~\ref{alg:single-poisoning} then takes $O(nmd)$ time.

% The time complexity of line $3-7$ is $O(nd)$, where $n$ is the number of points in dataset $\mathcal{D}$ and $d$ is the number of dimensions of the point in dataset $\mathcal{D}$. In addition, the time complexity of line $10$-$14$ is $O(m \cdot \mathcal{M})$, where $m$ is the cardinality of the label set and $\mathcal{M}$ is the maximum step computed in line $12$ to use only one strategy. Therefore, the total time complexity is $O(nd + m \cdot \mathcal{M})$.

\subsection{Multiple Test Points}
\label{sec:multiple}

We now consider the case in which an attacker wishes to attack a complete dataset $\mathcal{D}$ with minimum number of cells such that all $k$ test points $\test_1, \test_2, \dots, \test_k$ are certifiably non-robust for NBC.

A simple heuristic algorithm simply iteratively runs Algorithm~\ref{alg:single-poisoning} over every test point $\test_i$ and then take the union of all the missing cells found, which we present in Algorithm~\ref{alg:multiple-naive} in Appendix~\ref{sec:heuristic-algo-appendix} in detail.
While it can be easily verified that all test points are not certifiably robust for NBC over the incomplete dataset produced by Algorithm~\ref{alg:multiple-naive}, the number of poisoned cells is not necessarily minimal.
Indeed, compared with the single test point setting, the challenge of poisoning a dataset for multiple test points is that altering one single cell may affect the prediction of all test points that agree on that cell.

This observation allows us to show that the data poisoning problem for multiple test points is \textbf{NP}-complete for datasets over at least $3$ dimensions.
% It is then not surprising that for large dimension $d$, the problem \textsc{$d$-BatchPoisoning} is $\NP$-hard.

\begin{theorem}
\label{thm:batch-hard}
For every $d \geq 3$, $\poisoncrnbc$ is $\NP$-complete on datasets with $d$ dimensions and multiple test points.
\end{theorem}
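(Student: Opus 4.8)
The plan is to prove the two directions separately. Membership in $\NP$ is routine: for the decision version of $\poisoncrnbc$ (``is there a poisoned instance of $\mathcal{D}$ with at most $k$ \texttt{NULL} cells for which all of $\test_1,\dots,\test_k$ are certifiably non-robust?''), a poisoned instance $\mathcal{D}^{\dagger}$ is a polynomial-size certificate; to verify it I would check that $\mathcal{D}^{\dagger}$ is obtained from $\mathcal{D}$ by setting at most $k$ cells to \texttt{NULL}, and then run the multi--test-point extension of the decision algorithm of Section~\ref{sec:decision} on $\mathcal{D}^{\dagger}$ to confirm that every $\test_i$ is \emph{not} certifiably robust. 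Both checks are polynomial. For hardness I would reduce from a covering problem; I would use \textsc{Vertex Cover} (a reduction from \textsc{Set Cover} would also work), because the obstruction noted just before the theorem — a single \texttt{NULL}ed cell influences the prediction of every test point that agrees with it — is exactly the many-to-one structure of set/vertex covering.

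For the reduction I would first handle $d=3$ and then reduce every $d>3$ to it by appending $d-3$ dummy features each having a single constant domain value: such features multiply every support value by the same constant and never contain a cell worth setting to \texttt{NULL}. Given a graph $G=(V,E)$, I would build a \emph{complete} dataset $\mathcal{D}_G$ over three small-domain features and two labels $\oplus,\ominus$, together with one test point $\test_e$ for each edge $e\in E$, arranged so that: (1) in $\mathcal{D}_G$ every $\test_e$ is certifiably robust with predicted label $\oplus$, and is ``one cell away'' from becoming non-robust; (2) for each vertex $v$ there is a designated cell $c_v$ such that setting $c_v$ to \texttt{NULL} lowers the minimum $\oplus$-support (or raises the maximum $\ominus$-support) for exactly those $\test_e$ with $v\in e$, by just enough to break the strict inequality of Lemma~\ref{lemma:decision} and make each such $\test_e$ non-robust; and (3) no other \texttt{NULL}ed cell, and no choice of fewer cells, can make any $\test_e$ non-robust. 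Concretely, one feature would encode the two endpoints of each edge, a second feature would carry the $c_v$-gadget through rows labelled $\oplus$ that agree with the relevant $\test_e$'s, and the third feature together with padding rows would tune the label counts $N_\oplus,N_\ominus$ and the agreement counts $E_{i,j}$ so the ``gap'' is exactly one and is insensitive to irrelevant cells. Under (1)--(3), a set $C$ of \texttt{NULL}ed cells makes all $\test_e$ non-robust if and only if $\{\,v : c_v\in C\,\}$ is a vertex cover of $G$; hence the minimum poisoning size equals the minimum vertex cover size, the reduction is polynomial, and $\NP$-hardness follows.

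The main obstacle is property (3), the ``no shortcuts'' guarantee, which forces the attacker's only cost-effective move to be \texttt{NULL}ing vertex cells. Establishing it requires controlling precisely how the integer-ratio products of Eq.~(\ref{eq:lowerbound}) and Eq.~(\ref{eq:upperbound}) move under a single cell deletion, and then arguing that (a) a designated deletion crosses the robustness threshold for exactly the intended edge-test-points and for no others; (b) deletions at non-designated cells provably cannot help; and (c) two ``partial'' deletions cannot jointly simulate one designated deletion — all while keeping $\mathcal{D}_G$ initially complete, of polynomial size, and free of any accidentally non-robust test point, and while choosing the feature domains so that the closed-world possible worlds of the poisoned instances are exactly those intended. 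I expect the discreteness of single-cell changes — each deletion nudging a product of the form $\prod_j E_{i,j}/N_i$ by a controlled, discrete amount — to be what makes a robust ``gap of one'' implementable, and most of the effort will go into pinning the arithmetic so that no combination of cheap moves can reach that gap except through the vertex cells.
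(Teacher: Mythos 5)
Your membership argument matches the paper's: the poisoned instance is the certificate and the polynomial-time decision algorithm of Section~\ref{sec:decision} verifies it. Your hardness plan is also the same in outline as the paper's --- a reduction from \textsc{Vertex Cover} with one test point per edge and one designated cell per vertex, so that the set of \texttt{NULL}ed designated cells is forced to be a vertex cover --- but the two steps that actually make such a reduction work are missing or mis-stated. First, the embedding of the graph into the feature space is not workable as you describe it. You say ``one feature would encode the two endpoints of each edge,'' but a test point carries one value per feature, and the whole mechanism requires that the two endpoints $u,v$ of an edge sit at two \emph{different} attributes, with each vertex $w$ assigned a \emph{fixed} attribute across all edges incident to it (so that one cell alteration hits all of them). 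That assignment is exactly a proper coloring of $G$ in which adjacent vertices receive different colors; the paper gets it from Brooks' theorem applied to $d$-regular graphs (for your $d=3$ plan you would need \textsc{Vertex Cover} on cubic graphs plus their $3$-colorability). Without the coloring there is no consistent way to place vertices at attributes, and the ``agree on a shared cell'' structure that drives the reduction does not exist.

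Second, your property (3) --- no combination of non-designated alterations can substitute for a designated one --- is the crux, and you leave it entirely open while also steering toward a harder version of it. Your ``gap of exactly one'' framing asks for a delicate threshold, whereas the paper's construction avoids any delicacy: each vertex value $u$ occurs \emph{exactly once} at attribute $\chi(u)$ among the label-$l_1$ points, so the single designated alteration drives $\Pr(x_{\chi(u)}\mid l_1)$ to $0$ and hence annihilates the entire product $\Pr(\test_{u,v}\mid l_1)$ --- a robust, non-marginal flip. The ``no shortcuts'' direction is then enforced by giving every non-vertex value multiplicity $M = n + (1+n/d)^{3}+1$ under label $l_1$ and bounding, via an averaging (AM--GM) inequality, the product $\prod_k(1+y_k)\le(1+\frac{\sum_k y_k}{d})^d\le(1+\frac{n}{d})^d$ achievable by at most $k<n$ alterations on the $l_2$ side, against $\prod_k (M-x_k)\ge (1+\frac{n}{d})^d$ on the $l_1$ side whenever both endpoint cells are untouched. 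This is the concrete arithmetic you defer to ``pinning the arithmetic''; without it (or an equivalent quantitative argument) the claimed equivalence between minimum poisoning size and minimum vertex cover size is unproven. Your idea of padding from $d=3$ to general $d$ with constant dummy features is fine as a technicality, but the two gaps above are where the proof actually lives.
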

The full proof of Theorem~\ref{thm:batch-hard} is deferred to Appendix~\ref{sec:batch-hard-appendix}, and an example reduction is provided in Example~\ref{ex:graph}.

\section{Experiments}
\label{sec:experiments}

In this section, we present the results of our experimental evaluation. we perform experiments on ten real-world datasets from Kaggle~\cite{web:kaggle}, and compare the performance of our efficient algorithms (as presented in Sections 3 and 5) against 
other straightforward solutions as baselines. 
\subsection{Experimental Setup}
We briefly describe here the setup for our experiments. 
\\
\noindent \textbf{System Configuration.} Our experiments were performed on a bare-metal server provided by Cloudlab~\cite{cloudlab}. The server is equipped with two 10-core Intel Xeon E5-2660 CPUs running at 2.60 GHz.
% With 157GB of memory, all experiments were performed entirely in memory, ensuring that performance was not affected by disk I/O. All algorithms were implemented and run as single-threaded programs, allowing us to focus on algorithmic performance without interference from parallelization or multithreading effects.

\noindent \textbf{Datasets.} We use ten real-world datasets from Kaggle: \textsf{heart (HE)}, \textsf{fitness-club (FC)}, \textsf{fetal-health (FH)}, \textsf{employee (EM)}, \textsf{winequalityN (WQ)}, \textsf{company-bankruptcy (CB)}, \textsf{Mushroom (MR)}, \textsf{bodyPerformance (BP)}, \textsf{star-classification (SC)}, \textsf{creditcard (CC)}. The details and the metadata of our datasets are summarized in Appendix~\ref{sec:datasets-appendix}. We first preprocess every dataset so that it contains only categorical features by partitioning each numerical feature into $5$ segments (or bins) of equal size using sklearn's KBinsDiscretizer~\cite{web:sklearn}.  %, where $n$  the number of bins equals 5.
The datasets are originally complete and do not have missing values. 

To obtain perturbed incomplete datasets with missing values from each of the ten datasets, we sample data cells to be marked as \texttt{NULL} as missing values uniformly across all cells from all features.
% in two ways: (i) Uniform Perturbation: uniformly across all cells from all features, and (ii) Feature Perturbation: uniformly across all cells from $20\%$ of randomly chosen features. 
The number of cells to be sampled is determined by a {\em missing rate}, which is defined as the ratio between number of cells with missing values and the number of total cells considered in the dataset.
We generate the incomplete datasets by varying the missing rate from $20\%$ to $80\%$ with an increment of $20\%$. 
To fill cells where values are missing, we consider the {\em active domain} (i.e., values observed in the dataset) of each categorical feature. Given the complete dataset, we randomly divide the dataset into two subsets: 80\% for training and 20\% for testing. Next, the training set is used for model training and we randomly select $1$ to $16$ data points from the testing set as test points.

\begin{figure*}[!ht]
    \centering
    \includegraphics[width=0.96\textwidth]{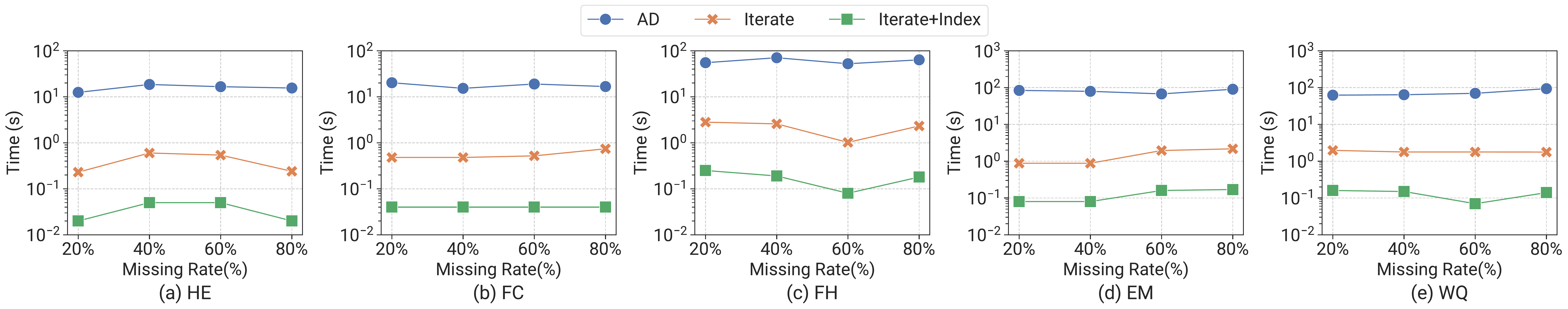}
    % \vspace{-3mm}
    % \begin{center}
    % 	\footnotesize  (a) HE \qquad\qquad\qquad\quad (b) FC \qquad\qquad\qquad\quad (c) FH \qquad\qquad\qquad\quad (d) EM \qquad\qquad\qquad\quad (e) WQ
    % \end{center}
    % \vspace{-2mm}
    \caption{Decision - Running Time vs Missing Rates on Different Datasets}
    \label{fig:decision_time_vs_missing_rate}
    % \vspace{-2mm}
\end{figure*}

\begin{figure*}[!ht]
    \centering
    \includegraphics[width=0.96\textwidth]{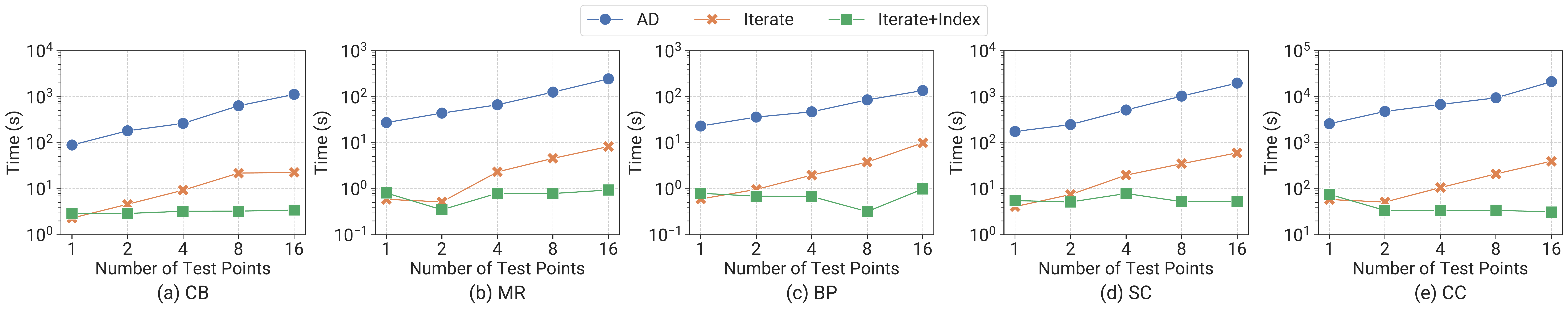}
    % \vspace{-3mm}
    \caption{Decision - Running Time vs Number of Test Points on Different Datasets}
    \label{fig:decision_time_vs_number_of_test_points}
    % \vspace{-3mm}
\end{figure*}

\subsection{Algorithms Evaluated}

% In this section, we introduce all algorithms we evaluate for each problem mentioned in the previous sections. 
% For the \textsf{Decision Problem} and \textsf{Data Poisoning Problem} for NBC, we evaluate all algorithms we introduced in the previous sections. All algorithms are implemented in Python 3.10.9. 
% We left out the evaluation of the algorithm for the \textsf{Counting Problem} for NBC due to its high time complexity.

\subsubsection{Decision Problem.} %We compare the following approaches for the decision problem:
For $\crnbc$, we compare the following algorithms. 

\textbf{Approximate Decision (\textbf{AD}).} This algorithm simply samples $100$ possible worlds uniformly at random, and returns {\em certifiably robust} if the NBC prediction for the test point agrees for every sampled possible world that NBC is trained on, or otherwise returns \emph{certifiably non-robust}. 
Note that this algorithm might return false positive results.

\textbf{Iterative Algorithm (\textbf{Iterate}).} This is the algorithm mentioned in~\cite{ramoni2001robust}. When we are given $k$ test points, the algorithm mentioned~\cite{ramoni2001robust} will run Algorithm~\ref{alg:improved_decision_miss} for $k$ times without using the indexing techniques proposed in Section~\ref{sec:decision}. 
% In particular, when there are $k$ test points, Algorithm~\ref{alg:improved_decision_miss} is executed $k$ times. 

\textbf{Iterative Algorithm with Index (\textbf{Iterate+Index}).} For multiple test points, we run Algorithm~\ref{alg:improved_decision_miss} combined with the indexing techniques described in Section~\ref{sec:decision}.

\subsubsection{Data Poisoning Problem.} %We compare the following approaches for the decision problem:
For $\poisoncrnbc$, we evaluate the following algorithms. 
%\zhiwei{How the robustness in each case is checked? It seems that for SR and GP, it's clear that there should be smarter ways to check the robustness - and we should understand and describe the runtime breakdown for each poisoning algorithm - time spent on poisoning v.s time spent on certifiable robustness checking.}

\textbf{Random Poisoning (\textbf{RP}).} This algorithm randomly selects a cell to be marked as \texttt{NULL} iteratively from the original complete dataset until it produces an incomplete dataset for which $\test$ is certifiably non-robust, which is checked by running \textbf{Iterate}. There is no limit on the number of cells to be selected.

\textbf{Smarter Random Poisoning (\textbf{SR}).} In this algorithm, we first obtain the prediction $l$ of the input test point over the input complete dataset, and randomly fix another label $l' \neq l$. Then, we randomly perform operation A1 or A2, as introduced in Section~\ref{sec:poisoning}, until $\test$ becomes certifiable non-robust, which is checked using Algorithm~\ref{alg:improved_decision_miss}.

\textbf{Greedy Search Poisoning (\textbf{GS}).} This algorithm is described in the Algorithm~\ref{alg:single-poisoning} mentioned in Section~\ref{sec:poisoning}. This algorithm is able to achieve optimal solution when we are given single test point.

When we are given more than one test point, we use the above algorithms in the Line 3 of Algorithm~\ref{alg:multiple-naive} in Appendix~\ref{sec:heuristic-algo-appendix}.

\begin{figure*}[!ht]
    \centering
    \includegraphics[width=0.96\textwidth]{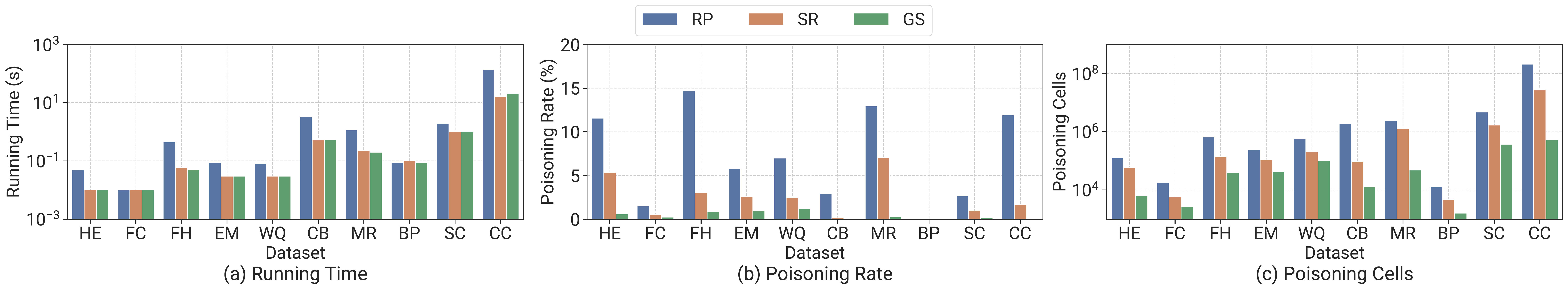}
    % \vspace{-3mm}
    \caption{Single Point Data Poisoning---Running Time and Poisoning Rate on Different Datasets}
    \label{fig:single_point_poisoning}
    % \vspace{-3mm}
\end{figure*}

\subsection{Evaluation Results}
% \noindent \textbf{Evaluation Method.}
% We generate the incomplete datasets by varying the missing rate from $0.002$ to $0.01$ with an increment of 0.002. 
% For each experiment, we vary the number of test points from 1 to 32.  
% The test points are randomly selected from the original complete dataset and the rest of the dataset is used for model training.  

% When there is more than one test point, the decision algorithm needs to certify the robustness of {\em every} test point while for 
% data poisoning, we aim at making at least one test point certifiably non-robust. 
% \xiating{I thought for poisoning, we need all test points to be certifiably non-robust?}
% All algorithms are implemented in Python 3.6. 

% For each dataset, we execute every algorithm $5$ times. 
For both $\crnbc$ and $\poisoncrnbc$, we report the average running time among the $5$ executions for fixed missing rate and the number of points.
For $\poisoncrnbc$, we additionally report the poisoning rate (for poisoning algorithms only), defined as the ratio between the number of cells marked as \texttt{NULL} by the data poisoning algorithm and the total number of cells in the training set. We defer some additional results to Appendix~\ref{sec:more-experiments-appendix}.
% As a general reference, the average training and prediction times of a single test point over each dataset \textsf{FH}, \textsf{BP}, \textsf{MR} and \textsf{SC} are $0.067$s, $0.253$s, $0.331$s and $2.812$s respectively, shown as black dashed lines on Figure 1-6 (i.e., \textbf{Plain}). 
%\xiating{Actually, could we put these lines as a horizontal line in all figures? It would be more readable for my comment in the next section.}

% $$\frac{\text{\# of cells attacked}}{\text{\# of cells in the dataset}}$$
% \begin{table}[!ht]
%     \centering
%     \begin{tabular}{c|c}
%       Dataset & Running Time (s) \\
%       \hline 
%       \textsf{fetal\_health (FH)} & 0.067 \\ 
%       \textsf{bodyPerformance (BP)} & 0.253  \\ 
%       \textsf{Mushroom (MR)} & 0.331 \\
%       \textsf{star\_classification (SC)} & 2.812
%     \end{tabular}
%     \caption{Running time of generating single possible world over various datasets.} 
%     \label{tab:single_possible_world}
%     % \vspace{-6mm}
% \end{table}

\subsubsection{Decision Problems.}
%We compare three decision algorithms: Random Sampling, Baseline Algorithm and Faster Algorithm here. In the first set of experiments, we evaluate the running time of our methods against the baseline methods over various missing rates. Figure~\ref{fig:random_decision_time_vs_missrate} indicates that the decision algorithms are robust to the missing rate, which satisfies the time complexity analysis in Section~\ref{sec:decision}. We also can observe that Faster Algorithm is more efficient than Baseline and Random. For example, when the missing rate is 0.2\% and the dataset is SC, Faster Algorithm is up to 100x (resp. 50x) faster than Baseline (resp. Random). The reason behind it is that Faster Algorithm needs less possible worlds to evaluate $\test$ is certifiably robust. In addition, we also note that Random Sampling Algorithm cannot provide any theoretical guarantees. 
% To answer Q1, 
We first evaluate the running time of different decision algorithms (\textbf{AD}, \textbf{Iterate} and \textbf{Iterate+Index}) for $16$ test points on different datasets over varying missing rates from $20\%$ to $80\%$, with an increment of $20\%$.
% We also include the average running time of training and predicting a single test point for each original complete dataset, shown as \textbf{Plain} as a reference.
From Figure~\ref{fig:decision_time_vs_missing_rate},
we observe that \textbf{Iterate+Index} is almost $20\times$ faster than \textbf{Iterate} and much faster than the straightforward solution \textbf{AD}, which nonetheless does not have correctness guarantees.
We can also see that the efficiency of \textbf{Iterate+Index} is robust against the missing rate in the datasets as the running time of our algorithms is almost uniform.
This shows that it is viable to efficiently apply certifiable robustness in practice to obtain much stronger guarantees in addition to the traditional ML pipeline.
In addition, we run \textbf{AD}, \textbf{Iterate} and \textbf{Iterate+Index} on varying number of test points for each dataset.
Figure~\ref{fig:decision_time_vs_number_of_test_points} shows the running time of each algorithm over datasets obtained by varying from $1$ test point to $16$ test points.
We observe that the running time of \textbf{AD} and \textbf{Iterate} grows almost linearly with the number of test points, whereas the running time of \textbf{Iterate+Index} remains almost the same regardless of the number of test points. 
The initial offline index introduced in the end of Section~\ref{sec:decision} can be slow to build at the beginning (see the sub-Figures when the number of test points is $1$), but its benefits are more apparent when the number of test points increases.
For $16$ test points, we see that \textbf{Iterate+Index} outperforms \textbf{Iterate} by $10\times$.

% When pre-building indexes to pre-compute the conditional probability for all features 
% in the dataset across the active domain before checking the robustness of test points, certifying the robustness of the data batch is much more efficient.
% As shown in ,  the runtime change of \textbf{Iterate+Index} (including the time for building index) is much flatter compared to other algorithms. This suggest 
% that the performance gap between \textbf{Iterate+Index} and other algorithms increases given more test points.

In summary, \textbf{Iterate+Index} is a practical algorithm that can efficiently certify robustness for NBC for multiple test points and large datasets.  %and have a strong linear dependence on the number of test points.
%In summary, our algorithm is orders of magnitude faster than alternatives while providing correctness guarantees, which is the preferred choice for the decision problem over Naive Bayes Classifier.

\subsubsection{Data Poisoning Problem.}
%Next, we conduct two sets of experiments to show the effectiveness and efficiency of our algorithm on data poisoning problem. We compare three algorithms here: Random, SR and Greedy over four real datasets.
%For the first set of experiments, we evaluate the running time of our data poisoning methods against various number of test points. From Figure~\ref{fig:poisoning_time_vs_points}, we can point out that the running time of SR and Greedy is linear to the number of test points. However, Random is not linear to the number of test points. The reason behind it is that we randomly attack each cell in the dataset and we evaluate whether there exists one of the test points is not certifiably robust after attacking in Random Algorithm. If yes, then we stop the whole process. However, as for other methods, we need to evaluate the plan of attacking for each test points. Thus, the running time of SR and Greedy is linear to the number of test points.
%For the second set of experiments, We compare the number of cells we need to attack against the number of test points. From Table~\ref{tab:poisoning_step_vs_points}, we can observe that the number of cells attacked by greedy algorithm is smaller than that of other algorithms, which matches the theoretial analysis in Section~\ref{sec:poisoning}, which shows that the greedy algorithm is able to achieve the optimal solutions.
%In conclusion, our greedy algorithm is able to outperform the baseline in speed and effective. And the experimental results match the theoretial analysis in the previous section.
We now present the evaluation results of different data poisoning algorithms for a single test point. We defer the evaluation of multiple test points to the Appendix~\ref{sec:more-results-poisoning-appendix} due to space constraints.

% Our evaluations to answer Q1 are presented in Figure~\ref{fig:single_poisoning_time}, Figure~\ref{fig:multiple_poisoning_time} and Figure~\ref{fig:multiple_poisoning_cells}. 
In Figure~\ref{fig:single_point_poisoning}, we see that the running time of \textbf{GS} is significantly faster than \textbf{RP} and marginally faster than \textbf{SR} over most datasets. This is because \textbf{SR} uses the optimal strategies A1 and A2 as in \textbf{GS}, but not in the optimal sequence. 

% For Q3, Figure~\ref{fig:single_poisoning_rate} and Figure~\ref{fig:multiple_poisoning_rate} 
Moreover, Figure~\ref{fig:single_point_poisoning} presents the ratio between the number of poisoned cells and the total number of cells, called the poisoning rate. Note that \textbf{RP} and \textbf{SR} always have a higher poisoning rate compared to \textbf{GS} over \textsf{Single Point Data Poisoning Problem}. Since \textbf{GS} is provably optimal when we are given a single test point, it has the smallest poisoning rate across all algorithms for the same dataset. 

Furthermore, Figure~\ref{fig:single_point_poisoning} shows the exact number of poisoned cells. Note that the number of poisoning cells is not small in Figure~\ref{fig:single_point_poisoning}, which indicates that the answer to the \textsf{Decision Problem} can help us to accelerate the data cleaning process.

Finally, note that \textbf{GS} achieves different poisoning rates for different datasets and the poisoning rates are often low. For example, the poisoning rate for \textsf{FC} is much lower than that for \textsf{WQ}, showing that the dataset \textsf{FC} is more prone to a data poisoning attack than \textsf{WQ}. In addition, the experimental results over ten real datasets indicate that NBC is sensitive to the data quality.

% This results can provide us an overview of whether NBC is sensitive to the data quality.
% \xiating{Make sure to decide whether to mention ``sensitivity'' in abstract/intro.}

In conclusion, \textbf{GS} not only provides guarantees for finding the minimum number of poisoned cells when we are given only one test point, but also achieves the best performance in terms of running time and over multiple points case. It also provides insights on the applicability of certifiable robustness.

% \Song{Here, we need to add more description}

% robustness of the dataset against data poisoning attacks.

% \vspace{-2mm}
\section{Related Work}
\label{sec:related}

% \xiating{Should we mention more\dots}

% However, from our results, Naive Bayes Classifiers are sensitive to the data quality. To make test points certifiably robust, we need to clean almost the whole datasets.

% With the development of the Machine Learning techniques, the notion of relational learning over inconsistent databases was also explored~\cite{DBLP:conf/sigmod/PicadoDTL20,DBLP:conf/sigmod/PicadoTP18,DBLP:journals/pvldb/PicadoTP18}. Recently, motivated by the previous works about relational query over incomplete database, CPClean~\cite{klw+20} presents the notion of certain predicion, which connect the database theory and certifiable robustness in ML. In addition, efficient algorithms have been developed for certifying the robustness of $k$-Nearest Neighbors~\cite{fk22}, SVM and linear regression~\cite{DBLP:conf/deem/ZhenCT23}. In this paper, we provide theoretial analysis on certain predictions over Naive Bayes classifiers.

\noindent \textbf{Data Poisoning.} Data poisoning attacks can be divided into triggered and triggerless attacks. As for triggered attacks, they modify the training dataset by adding poison examples, which leads to the models incorrectly categorizing test examples~\cite{cll+17, gdg17, lma+18, ttm19, nt20, ct21, sfc+22}. As for triggerless attacks, they introduce minor adversarial disturbances to the existing training dataset, causing the models to incorrectly classify test examples~\cite{shn+18, zhl+19, gfh+20, amw+21, ylm22}.

\noindent \textbf{Robustness in ML.} Robust learning algorithms have received much attention recently. Robustness of decision tree under adversarial attack has been studied in~\cite{czs+19, vv21} and interests in certifying robust training methods have been observed~\cite{swz+21}. Most recently, efficient algorithms have been developed for certifying the robustness of $k$-Nearest Neighbors~\cite{klw+20}, SVM and linear regression~\cite{skl17, DBLP:conf/deem/ZhenCT23}. 

\noindent \textbf{Querying over Inconsistent Databases.}
Inconsistent database has been studied for several decades. \emph{Consistent query answering} is a principled approach to answer consistent answer from an inconsistent database violating from~\cite{abc99, kw18a, kw18b}. A long line of research has been dedicated to  studying this question from theoretical perspective~\cite{lib11, ks12, kw15, DBLP:conf/pods/KoutrisOW21,10.1145/3651139} and system implementations~\cite{DBLP:conf/sat/DixitK19,DBLP:journals/dke/MarileoB10, manna_ricca_terracina_2015,10.1145/3340531.3411911,DBLP:journals/pacmmod/FanKOW23}. 

\noindent \textbf{Data Cleaning.} 
Cleaning the dirty data is the most natural resolution in the presence of dirty data. There has been a long line of research on data cleaning, such as error detection~\cite{acd+16, mac+19, hmi+19, ir22}, missing value imputation~\cite{tkf+13, wzi+20}, and data deduplication~\cite{cik16, zcd+22, fd21, wzd22}. Multiple data cleaning frameworks have been developed~\cite{wkf+14,kww+16,wfg+17}. For example, SampleClean~\cite{wkf+14} introduces the Sample-and-Clean framework, which answers aggregate query by cleaning a small subset of the data. ActiveClean~\cite{kww+16} is used to clean data for convex models which are trained by using gradient descent. BoostClean~\cite{wfg+17} selects data cleaning algorithms from pre-defined space. Moreover, there also exists some data cleaning frameworks by using knowledge bases and crowdsourcing~\cite{bmn+15, cmi+15, lwz+16}. For instance, KATARA~\cite{cmi+15} resolves ambiguity by using knowledge bases and crowdsourcing marketplaces. Furthermore, there have been efforts to accelerate the data cleaning process~\cite{DBLP:journals/pvldb/RekatsinasCIR17,DBLP:conf/icde/ChuIP13,DBLP:conf/sigmod/ChuMIOP0Y15,DBLP:journals/pvldb/RezigOAEMS21, ysz+20}. Recently, CPClean~\cite{klw+20} provides a data cleaning framework based on certain prediction. 

% \noindent \textbf{Data Poisoning.} 
% Data poisoning attacks on machine learning has been studied for nearly a decade. A large number of work~\cite{npx+14, xbn+15, mz15} shows that data poisoning can easily threaten the integrity of machine learning models. Many of these works focus on a specific class of models such as SVM and linear regression. There are many works proposing robust learning algorithms to defend against data poisoning~\cite{skl17}. 
% We delve into the data poisoning issue in our study to shed light on the practical application of certifiable robustness.
% Our work investigates how to attack the complete dataset to make test points certifiably non-robust.

% \vspace{-2mm}
\section{Conclusion, Limitations, and Future Work}
\label{sec:conclusion}
% In this paper, we extend the notion of certifiable robustness to Naive Bayes classifier. We design and present efficient algorithms certifying robustness for single and multiple data points. 
% We present a polynomial algorithm for the counting problem and show that it has an exponential dependence on the number of attributes and hence efficient counting remains a challenging and open problem. 
% Then we study the data poisoning problem and develop an efficient polynomial algorithm that can find the minimal number of cells on a complete dataset to attack to make a prediction certifiably non-robust. Furthermore, we present an theoretical analysis to show that batch poisoning problem is NP-hard.

% There are many directions that we would like to explore in future work. One direction is to extend the study of certifiable robustness to other popularly used models of more complex structures such as random forest and gradient boosted trees. Additionally, when no efficient exact algorithms can be found, designing proper approximation algorithms with theoretical guarantees is of great interest. Finally, we envision that the insights gained from the study of certifiable robustness of different ML models would help build more efficient data cleaning systems taking into consideration the downstream ML tasks.

In this paper, we study the certifiable robustness of Naive Bayes classifiers over incomplete datasets by solving the Decision Problem and the Data Poisoning Problem algorithmically. 
The experimental results show that all our algorithms exhibit a significant speed-up against the baseline algorithm.

While our results only hold for Naive Bayes Classifiers, an illuminating future direction to extend the study of certifiable robustness to other popularly used models with more complex structures, such as random forest and gradient boosted trees. It is also intriguing to design a general framework to study certifiable robustness for general ML classifiers.
% However, the findings from the poisoning problem indicate that in certain instances, almost the entire dataset needs to be thoroughly cleaned to ensure that the test points fulfill \emph{Certain Predictions}. Therefore, we need to consider more factors before integrating \emph{Certain Predictions} into data cleaning systems.

% In addition, we identify several open problems here: (i) It remains open to study the certifiable robustness of other widely used ML models such as random forests and gradient boosted trees. It would be interesting if we can develop an algorithm for a number of models. 
% (ii) It would be very helpful to design approximation algorithms with theoretical guarantees for $\cntcrnbc$ and $\poisoncrnbc$ for multiple test points. (iii) The insights gained from the study of certifiable robustness of different ML models would lead to more efficient data cleaning systems.

% (ii) It would be very helpful to design approximation algorithms with theoretical guarantees for $\cntcrnbc$ and $\poisoncrnbc$ for multiple test points. 
% \Song{we need to change the description here.}

\section*{Acknowledgements}

We thank the anonymous reviewers for their constructive feedback
and comments. We also thank Austen Z. Fan
for his constructive discussions during the early development of this project.
This work is supported by the National Science Foundation under grant IIS-1910014
and the Anthony C. Klug NCR Fellowship.

\section*{Impact Statement}

This paper presents work whose goal is to advance the field of Machine Learning. There are many potential societal consequences of our work, none which we feel must be specifically highlighted here.

\nocite{langley00}

\bibliography{example_paper}
\bibliographystyle{icml2024}

%%%%%%%%%%%%%%%%%%%%%%%%%%%%%%%%%%%%%%%%%%%%%%%%%%%%%%%%%%%%%%%%%%%%%%%%%%%%%%%
%%%%%%%%%%%%%%%%%%%%%%%%%%%%%%%%%%%%%%%%%%%%%%%%%%%%%%%%%%%%%%%%%%%%%%%%%%%%%%%
% APPENDIX
%%%%%%%%%%%%%%%%%%%%%%%%%%%%%%%%%%%%%%%%%%%%%%%%%%%%%%%%%%%%%%%%%%%%%%%%%%%%%%%
%%%%%%%%%%%%%%%%%%%%%%%%%%%%%%%%%%%%%%%%%%%%%%%%%%%%%%%%%%%%%%%%%%%%%%%%%%%%%%%
\newpage
\appendix
\onecolumn

\section{Missing Details in Section~\ref{sec:decision}}

\subsection{Proof of Lemma~\ref{lemma:decision}}
\label{sec:decision-proof-appendix}
% \begin{lemma}
%     A test point $\test$ is certifiably robust for NBC over $\idata$ if and only if there is a label $l$ such that for any label $l' \neq l$, $\minsupport{\idata}{l}{\test} > \maxsupport{\idata}{l'}{\test}.$
% \end{lemma}
\begin{proof}
It is easy to see that $\test$ is certifiably robust if there exists some label $l$ such that for any label $l' \neq l$, 
$\minsupport{\idata}{l}{\test} > \maxsupport{\idata}{l'}{\test}.$
In this case, the label $l$ will always be predicted for $\test$. 
Indeed, for any possible world  $\mathcal{D}$ of $\idata$, we have 
$$\support{\mathcal{D}}{l}{\test} \geq \minsupport{\idata}{l}{\test} > \maxsupport{\idata}{l'}{\test} \geq \support{\mathcal{D}}{l'}{\test},$$
and thus $f_{\mathcal{D}}(\test) = l$. To help the readers understand the intuition, we give an example in Appendix~\ref{sec:decision-example-appendix}.

The other direction holds somewhat nontrivially: if $\test$ is certifiably robust, then we claim that there must be a label $l$ such that for any label $l' \neq l$, $\minsupport{\idata}{l}{\test} > \maxsupport{\idata}{l'}{\test}.$
Assume that $\test$ is certifiably robust. Let $\mathcal{D}$ be a possible world of $\idata$, and let $l = f_{\mathcal{D}}(\test)$.
Suppose for contradiction that there is a label $l' \neq l$ such that 
$$\minsupport{\idata}{l}{\test} \leq \maxsupport{\idata}{l'}{\test}.$$
Let $\mathcal{D}_1$ and $\mathcal{D}_2$ be possible worlds of $\idata$ such that 
$$\support{\mathcal{D}_1}{l}{\test} = \minsupport{\idata}{l}{\test}$$
and
$$\support{\mathcal{D}_2}{l'}{\test} = \maxsupport{\idata}{l'}{\test}.$$
Consider an arbitrary possible world $\mathcal{D}^*$ of $\idata$ that contains all data points in $\mathcal{D}_1$ with label $l$, all data points in $\mathcal{D}_2$ with label $l'$.
It is easy to verify that $\support{\mathcal{D}^*}{l}{\test} = \support{\mathcal{D}_1}{l}{\test}$ and $\support{\mathcal{D}^*}{l'}{\test} = \support{\mathcal{D}_2}{l'}{\test}$.
We then have
\begin{align*}
\support{\mathcal{D}^*}{l}{\test} &= \support{\mathcal{D}_1}{l}{\test} = \minsupport{\idata}{l}{\test} \\
&\leq \maxsupport{\idata}{l'}{\test}= \support{\mathcal{D}_2}{l'}{\test} = \support{\mathcal{D}^*}{l'}{\test}
\end{align*} 
and therefore 
$f_{\mathcal{D}^*}(\test) \neq l$, a contradiction to that $\test$ is certifiably robust for NBC over $\idata$.
\end{proof}

\subsection{Additional Examples}

\begin{figure*}
\centering
\subfloat[An incomplete dataset $\idata$]{
\adjustbox{valign=b}{
\begin{tabular}{cc}
\begin{tabular}{c | c | c }
$X$ & $Y$ &  \\
\hline
$a$ & $b$ & \multirow{4}{1pt}{$l^*$}\\
$c$ & $b$ \\
$a$ & $d$ \\
$a$ & $\square$
\end{tabular}
&
\begin{tabular}{c | c | c}
$X$ & $Y$ &  \\
\hline
$a$ & $b$ & \multirow{5}{1pt}{$l$}\\
$b$ & $c$ & \\
$\square$ & $\bot_4$ & \\
$\square$ & $\bot_5$ & \\
$\bot_6$ & $\bot_7$ &
\end{tabular}
\end{tabular}
}
}
\quad
\subfloat[A possible world $\mathcal{D}$ of $\idata$.]{
\adjustbox{valign=b}{
\begin{tabular}{cc}
\begin{tabular}{c | c | c }
$X$ & $Y$ &  \\
\hline
$a$ & $b$ & \multirow{4}{1pt}{$l^*$}\\
$c$ & $b$ \\
$a$ & $d$ \\
$a$ & \framebox{$b$}
\end{tabular}
&
\begin{tabular}{c | c | c}
$X$ & $Y$ &  \\
\hline
$a$ & $b$ & \multirow{5}{1pt}{$l$}\\
$b$ & $c$ & \\
\framebox{$a$} & $\bot_1$ & \\
\framebox{$b$} & $\bot_2$ & \\
$\bot_3$ & $\bot_4$ &
\end{tabular}
\end{tabular}
}
}
\quad
\subfloat[An extreme possible world $\mathcal{D}^*$ of $\idata$.]{
\adjustbox{valign=b}{
\begin{tabular}{cc}
\begin{tabular}{c | c | c }
$X$ & $Y$ &  \\
\hline
$a$ & $b$ & \multirow{4}{1pt}{$l^*$}\\
$c$ & $b$ \\
$a$ & $d$ \\
$a$ & \framebox{$d$}
\end{tabular}
&
\begin{tabular}{c | c | c}
$X$ & $Y$ &  \\
\hline
$a$ & $b$ & \multirow{5}{1pt}{$l$}\\
$b$ & $c$ & \\
\framebox{$a$} & $\bot_1$ & \\
\framebox{$a$} & $\bot_2$ & \\
$\bot_3$ & $\bot_4$ &
\end{tabular}
\end{tabular}
}
}
\caption{
The incomplete data $\idata$ has (among others) two possible worlds $\mathcal{D}$ and $\mathcal{D}^*$ shown in (b) and (c) respectively.
}
\label{fig:decision-intuition}
\end{figure*}

\label{sec:decision-example-appendix}
\begin{example}
\label{ex:decision}
Consider the incomplete dataset $\idata$ in Figure~\ref{fig:decision-intuition}(a) and a test point $\test = (a, b)$.
For a possible world $\mathcal{D}$ of $\idata$, we have that 
$$\support{\mathcal{D}}{l^*}{\test} = \frac{4}{9} \cdot \frac{3}{4} \cdot \frac{3}{4} \quad\text{  and  }\quad \support{\mathcal{D}}{l}{\test} = \frac{5}{9} \cdot \frac{2}{5} \cdot \frac{1}{5}.$$

For another possible world $\mathcal{D}^*$ of $\idata$, we have
$$\support{\mathcal{D}^*}{l^*}{\test} = \frac{4}{9} \cdot \frac{3}{4} \cdot \frac{2}{4} \quad\text{  and  }\quad \support{\mathcal{D}^*}{l}{\test} = \frac{5}{9} \cdot \frac{3}{5} \cdot \frac{1}{5}.$$

Among all possible worlds of $\idata$, $\mathcal{D}^*$ has the lowest possible support value $\support{\mathcal{D}}{l^*}{\test}$ and the highest support value $\support{\mathcal{D}}{l}{\test}$ for $l$. But still, $$\support{\mathcal{D}^*}{l^*}{\test} > \support{\mathcal{D}^*}{l}{\test},$$ so $l^*$ will always be predicted regardless of which possible world of $\idata$ NBC is trained on.
\end{example}

\section{Missing Details in Section~\ref{sec:poisoning}}
\label{sec:poisoning-appendix}

\subsection{Proof of Lemma~\ref{lemma:poisoning-complete}}
\label{sec:poisoning-complete-appendix}
% \begin{lemma}
%     Let $\mathcal{D}$ be a complete dataset and $\test$ a test point. Let $l^* = f_{\mathcal{D}}(\test)$. Let $k$ be an integer. Then the following statements are equivalent:
%     \begin{enumerate}
%         \item There exists a solution $\mathcal{D}^{\dagger}$ for $\poisoncrnbc$ for a complete dataset $\mathcal{D}$ and a test point $\test$ with $k$ altered cells.
%         \item There exists a label $l\neq l^*$ and a solution $\mathcal{D}'$ for \textsf{AlterPrediction} for a complete dataset $\mathcal{D}$, a test point $\test$ and label $l^*$ with $k$ altered cells. 
%     \end{enumerate}
% \end{lemma}
\begin{proof}

We first establish the following claim.
\begin{lemma}
\label{claim:int}
Let $D$ be a complete dataset and $t$ a test point. Let $l^* = f_{D}(t)$. Let k be an integer. Then the following statements are equivalent:
\begin{enumerate}
\item there exists an incomplete dataset $D^+$ with $k$ missing cells for which $t$ is certifiably non-robust; and
\item there exists a label $l \neq l^*$ and a complete dataset $D’$ obtained by altering $k$ cells in D such that $f_{D’}(t) = l$.
\end{enumerate}
\end{lemma}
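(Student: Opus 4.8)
The plan is to prove Lemma~\ref{claim:int} by establishing the two implications separately, using the characterization of certifiable robustness from Lemma~\ref{lemma:decision} together with the closed-form expressions for $\minsupport{\cdot}{\cdot}{\cdot}$ and $\maxsupport{\cdot}{\cdot}{\cdot}$ in Eq.~(\ref{eq:upperbound}) and Eq.~(\ref{eq:lowerbound}).

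\medskip
\noindent\textbf{Direction $(2)\Rightarrow(1)$.} Suppose $\mathcal{D}'$ is obtained from $\mathcal{D}$ by altering $k$ cells so that $f_{\mathcal{D}'}(t) = l \neq l^*$. Let $D^+$ be the incomplete dataset obtained from $\mathcal{D}$ by replacing exactly those $k$ altered cells with \texttt{NULL}. Then $D^+$ has (at most) $k$ missing cells, and both $\mathcal{D}$ and $\mathcal{D}'$ are possible worlds of $D^+$ (each missing cell can be imputed by its value in $\mathcal{D}$ or by its value in $\mathcal{D}'$, and both of those values occur in the active domain of $D^+$ as long as the dataset has at least one fully-specified value in each relevant column; if a column becomes entirely \texttt{NULL}, one can instead only null out $k-1$ cells, but generically this is not an issue and can be handled by a trivial edge-case argument). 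Since $f_{\mathcal{D}}(t) = l^*$ and $f_{\mathcal{D}'}(t) = l \neq l^*$, the test point $t$ receives two different predictions over two possible worlds of $D^+$, so $t$ is certifiably non-robust for $D^+$.

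\medskip
\noindent\textbf{Direction $(1)\Rightarrow(2)$.} Suppose $D^+$ is an incomplete dataset with $k$ missing cells for which $t$ is certifiably non-robust. By Lemma~\ref{lemma:decision} (contrapositive), for \emph{every} label $l$ there is some label $l' \neq l$ with $\minsupport{D^+}{l}{t} \leq \maxsupport{D^+}{l'}{t}$. Apply this to $l = l^*$: there is $l' \neq l^*$ with $\minsupport{D^+}{l^*}{t} \leq \maxsupport{D^+}{l'}{t}$. Now use the fact, established in the main text, that the minimizing and maximizing possible worlds can be chosen \emph{independently per label} (the support for label $l$ only depends on how \texttt{NULL} cells in rows labelled $l$ are imputed). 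Hence there is a single possible world $\mathcal{D}'$ of $D^+$ in which the rows labelled $l^*$ are imputed to attain $\minsupport{D^+}{l^*}{t}$ and the rows labelled $l'$ are imputed to attain $\maxsupport{D^+}{l'}{t}$ (rows with other labels imputed arbitrarily). In $\mathcal{D}'$ we have $\support{\mathcal{D}'}{l^*}{t} \leq \support{\mathcal{D}'}{l'}{t}$, so $f_{\mathcal{D}'}(t) \neq l^*$; set $l = f_{\mathcal{D}'}(t)$. Finally, $\mathcal{D}'$ is a complete dataset obtained from $\mathcal{D}$ by altering at most $k$ cells (only the $k$ \texttt{NULL} cells of $D^+$ were filled, and the rest of $\mathcal{D}'$ agrees with $\mathcal{D}$). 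One can always pad the number of altered cells to exactly $k$ by toggling spare cells if a precise count is required, or simply note that "with $k$ altered cells" is interpreted as "at most $k$", consistent with its use in Lemma~\ref{lemma:poisoning-complete}.

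\medskip
\noindent\textbf{Main obstacle and finish.} The main subtlety is the per-label independence argument in $(1)\Rightarrow(2)$: one must argue that the two possible worlds witnessing $\minsupport{D^+}{l^*}{t}$ and $\maxsupport{D^+}{l'}{t}$ can be "glued" into one possible world without changing either support value — exactly the gluing step already used in the proof of Lemma~\ref{lemma:decision}, so it can be invoked directly. A secondary nuisance is the exact-versus-at-most count of altered/missing cells and the degenerate case of a fully-nulled column; both are handled by routine bookkeeping. Given Lemma~\ref{claim:int}, Lemma~\ref{lemma:poisoning-complete} follows by unfolding definitions: a solution $\mathcal{D}^\dagger$ for $\poisoncrnbc$ with $k$ altered cells is precisely an incomplete dataset with $k$ missing cells making $t$ certifiably non-robust (statement (1) of the claim), which is equivalent to the existence of some $l \neq l^*$ and a complete $\mathcal{D}'$ altering $k$ cells with $f_{\mathcal{D}'}(t) = l$; and $f_{\mathcal{D}'}(t) = l$ together with $f_{\mathcal{D}}(t) = l^*$ is, by the definition of $f_{\mathcal{D}'}$ as the support-maximizer, equivalent to $\support{\mathcal{D}'}{l}{t} > \support{\mathcal{D}'}{l^*}{t}$ for a suitable tie-breaking convention, i.e.\ to $\mathcal{D}'$ being a solution of \textsf{AlterPrediction} for $(\mathcal{D}, t, l^*)$ with $k$ altered cells. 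Minimality of $k$ transfers across the equivalences, completing the proof.
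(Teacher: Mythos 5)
Your proof is correct, and the $(2)\Rightarrow(1)$ direction is essentially identical to the paper's: null out the altered cells, observe that both $\mathcal{D}$ and $\mathcal{D}'$ are possible worlds of the resulting incomplete dataset, and conclude non-robustness from the disagreement of predictions. Where you diverge is in $(1)\Rightarrow(2)$. The paper argues directly from the definition of certifiable non-robustness: since $\mathcal{D}$ itself is a possible world of the poisoned instance and predicts $l^*$, non-robustness immediately yields \emph{some} possible world $\mathcal{D}'$ predicting a different label, and $\mathcal{D}'$ differs from $\mathcal{D}$ only on the $k$ nulled cells. You instead route through the contrapositive of Lemma~\ref{lemma:decision} and explicitly glue the world minimizing $\minsupport{D^+}{l^*}{\test}$ with the one maximizing $\maxsupport{D^+}{l'}{\test}$ to manufacture such a $\mathcal{D}'$. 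Both arguments work; the paper's is shorter and avoids re-invoking the gluing step (and sidesteps the tie-breaking caveat you flag, since it picks a world where the \emph{prediction} already differs rather than one where the supports merely satisfy a weak inequality), while yours has the minor virtue of producing the witnessing world constructively. Two side remarks: your concern about the closed-world active domain in $(2)\Rightarrow(1)$ (a nulled value that occurs nowhere else can no longer be imputed) is a real edge case that the paper's own proof silently ignores, so flagging it is to your credit; and both you and the paper treat ``$k$ altered cells'' as ``at most $k$'' where convenient, which is consistent with how the equivalence is used in Lemma~\ref{lemma:poisoning-complete}.
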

\begin{proof}
If (1) holds, let $\mathcal{D}^{\dagger}$ be a solution $\mathcal{D}^{\dagger}$ for $\poisoncrnbc$ on input $(\mathcal{D}, \test)$ with $k$ altered cells.
Since $\mathcal{D}$ is a possible world of $\mathcal{D}^{\dagger}$, then there exists a possible world $\mathcal{D}'$ of $\mathcal{D}^{\dagger}$ such that $l = f_{\mathcal{D}'}(\test) \neq l^*,$ and therefore
$$\support{\mathcal{D}'}{l}{\test} > \support{\mathcal{D}'}{l^*}{\test}.$$

If (2) holds, there exists some label $l \neq l^*$, and a complete dataset $\mathcal{D}'$ obtained by altering $k$ cells in $\mathcal{D}$ so that 
$$\support{\mathcal{D}'}{l}{\test} > \support{\mathcal{D}'}{l^*}{\test}.$$
Hence 
$f_{\mathcal{D}'}(\test) \neq l^*$. Consider the incomplete dataset $\mathcal{D}^{\dagger}$ obtained by setting all the altered cells in $\mathcal{D}'$ to \texttt{NULL}. Then $\test$ is certifiably non-robust, since for the two possible worlds $\mathcal{D}$ and $\mathcal{D}'$, $f_{\mathcal{D}'}(\test) \neq l^*$ and  $f_{\mathcal{D}}(\test) = l^*$.
\end{proof}

To show that Lemma 4.2 holds, note that the two items in Claim~\ref{claim:int} share the same integer parameter $k$, and thus the number of missing cells $k$ for $\poisoncrnbc$ is minimized simultaneously as we minimize the number of altered cells for \textsf{AlterPrediction}.
\end{proof}

\subsection{Proof of Observations O1, O2, and O3}
\label{sec:observation-args}

All three observations can also be formally explained from Equation~(\ref{eq:product}). Suppose that $ \Pr(x_j \mid l)_{\mathcal{D}_i}$ is the smallest among all $1 \leq j \leq d$. 
If we apply A2 to decrease $ \Pr(x_j \mid l^*)_{\mathcal{D}_i}$ in $\mathcal{D}_i$ and obtain $\mathcal{D}_{i+1}$, then $ \Pr(x_j \mid l^*)_{\mathcal{D}_{i+1}}$ is still the smallest among all $1 \leq j \leq d$, and thus the reduction in $ \Pr(\test \mid l^*)_{\mathcal{D}_i}$ remains the same since we always decrease the same value at the same attribute. 
However, if we apply A1 to increase $\Pr(x_j \mid l)_{\mathcal{D}_i}$ in $\mathcal{D}_i$ and obtain $\mathcal{D}_{i+1}$, then $\Pr(x_j \mid l)_{\mathcal{D}_{i+1}}$ may not be the smallest among all $1 \leq j \leq d$. If it still is, then the next increase in $\Pr(\test \mid l)_{\mathcal{D}_{i+1}}$ would be the same, or otherwise the increase in $\Pr(\test \mid l)_{\mathcal{D}_i}$ would grow.
Since $l \neq l^*$, the increase in $\Pr(\test \mid l)_{\mathcal{D}_i}$ by applying A1 does not change the value of $\Pr(\test \mid l^*)_{\mathcal{D}_i}$, and thus the increase only depends on the number of times that A1 has already been applied to obtain $\mathcal{D}_i$. 
A similar argument also holds for A2, and thus O3 holds.

\subsection{Proof of Lemma~\ref{lemma:key}}
\label{sec:proof-key-lemma}

\begin{proof}
By definition, we have $\delta_{1}^+ \leq \delta_2^+ \leq \dots $.

We are now ready to prove our key result: $\Delta_k$ can \emph{always} be minimized by applying only A1 or A2 to the orignal dataset $\mathcal{D}$, or equivalently,

To show Equation~(\ref{eq:goal}), assume that $\mathcal{D}_k$ is obtained by applying A1 $i^+$ times and applying A2 $i^-$ times, where $k = i^+ + i^-$. 
Then, 
$$\Delta_{k} = \Delta_0 - i^- \cdot \delta^- - \sum_{j \leq i^+} \delta_j^+.$$
Let $k^*\in\{0,1,2,\dots\}$ be the largest such that 
$$k^* \cdot \delta^- \geq \sum_{j \leq k^*} \delta_j^+.$$
Then it must be that $\delta_{k^*+1}^+ > \delta^-$.

If $i^+ \leq k^*$, then we have
\begin{align*}
\Delta_{k} &~= \Delta_0 - i^- \cdot \delta^- - \sum_{j \leq i^+} \delta_j^+ \\
&~\geq \Delta_0 - (i^- + i^+) \cdot \delta^- = \Delta_0 - k \cdot \delta^-
\end{align*}
and if $i^+ > k^*$, we have for any $j \geq i^+$, $\delta_j^+ \geq \delta_{k^*+1}^+ > \delta^-$, and thus
\begin{align*}
\Delta_{k} &~= \Delta_0 - i^- \cdot \delta^- - \sum_{j =1}^{i^+} \delta_j^+ \\
&~\geq \Delta_0 - \sum_{j = i^+ + 1}^{i^+ + i^-} \delta_j^+ - \sum_{j=1}^{i^+} \delta_j^+ = \Delta_0 - \sum_{j\leq k} \delta^+_j
\end{align*}
as desired.
\end{proof}

\subsection{A Heuristic Algorithm for Poisoning Multiple Test Points}

\label{sec:heuristic-algo-appendix}

\begin{algorithm}[!ht]
\caption{$\poisoncrnbc$-Multiple}
\label{alg:multiple-naive}
\KwIn{A complete dataset $\mathcal{D}$, $k$ test points $\test_1$, $\test_2$, \dots, $\test_k$}
\KwOut{An incomplete dataset $\mathcal{D}^{\dagger}$ obtained by setting some cells in $\mathcal{D}$ to \texttt{NULL} such that every $\test_i$ is not certifiably-robust for NBC }
$S \leftarrow \emptyset$ \\
\ForEach{$i = 1, 2, \dots k$}{ 
    $\mathcal{D}^{\dagger}_i \leftarrow \poisoncrnbc\text{-Single}(\mathcal{D}, \test_i)$ \\
    add the missing cells in $\mathcal{D}^{\dagger}_i$ to $S$ 
}
$\mathcal{D}^{\dagger} \leftarrow$ set every cells in $S$ to \texttt{NULL} in $\mathcal{D}$ \\
\Return $\mathcal{D}^{\dagger}$
\end{algorithm}

\subsection{Proof of Theorem~\ref{thm:batch-hard}}
\label{sec:batch-hard-appendix}
% \begin{theorem}
%     For every $d \geq 3$, $\poisoncrnbc$ is $\NP$-hard on datasets with $d$ dimensions.
% \end{theorem}
\begin{proof}
To show membership in \textbf{NP}, note that we may guess an incomplete dataset $\idata$ from the input complete dataset $\mathcal{D}$, and verify in polynomial time that all test points are certifiably non-robust over $\idata$, using Algorithm~\ref{alg:improved_decision_miss}.

For \textbf{NP}-hardness, we present a reduction from the \textsf{VertexCover} problem on $d$-regular graphs: Given a $d$-regular graph $G$ with vertex set $V$ and edge set $E$, in which all vertices have degree $d$, find a set $S \subseteq V$ of minimum size such that every edge in $E$ is adjacent to some vertex in $S$. 
    
    Let $V = \{1,2,\dots, n\}$ and $m = |E|$.  Without loss of generality, we assume that $G$ is not a clique. 
    By the Brook's theorem \cite{brooks1941colouring}, the $d$-regular graph $G$ is $d$-colorable. Let $\chi: V \rightarrow \{A_1, A_2, \dots, A_d\}$ be a $d$-coloring of $G$, which can be found in linear time.
    
    \textbf{Constructing $m$ test points.}
    We first construct $m$ test points on attributes $A_1, A_2,\dots, A_d$ as follows: For each edge $\{u, v\} \in E$,
    %  let $C_{i,j}$ be the unique color in $\{R, G, B\} \setminus \{\chi(i), \chi(j)\}$,  
    % we introduce a test point $\test_{i,j}$ with data values $i$, $j$ and $\langle i, j \rangle$ at attributes $\chi(i)$, $\chi(j)$ and $C_{i,j}$ respectively.
    introduce a test point $\test_{u,v}$ with values $u$ and $v$ at attributes $\chi(u)$ and $\chi(v)$, and a fresh integer value for all other attribute $A_k$ where $k \in \{1,2,\dots,d\}\setminus\{\chi(u), \chi(v)\}$. 
    
    It is easy to see that for two distinct edges $e=\{u,v\}$, $e'=\{u', v'\} \in E$, if $e$ and $e'$ do not share vertices, then $\test_{u,v}$ and $\test_{u',v'}$ do not agree on any attribute; and if they share a vertex $w$, then $\test_{u,v}$ and $\test_{u',v'}$ would agree on the attribute $\chi(w)$ with value $w$.
    
    Note that to construct the test points, we used $n + (d-2)m$ domain values, $n$ for each vertex $V$ and $(d-2)m$ for all the fresh constants.
    
    \textbf{Constructing a clean dataset.}
    For an attribute $A_k$, a domain value $u$ and a label $l$, we denote $p(A_k, u, l)$ as a datapoint 
    $$p(A_k, u, l) := (\square, \cdots, u, \cdots, \square),$$
    with label $l$, in which the attribute $A_k$ has domain value $u$, and each $\square$ denotes a fresh domain value.

    Let $M = n + (1 + n/d)^{3} + 1$ and $N = n + (d-2)m\cdot M$.
    We construct a dataset $\mathcal{D}$ of $2N$ points, in which $N$ points have label $l_1$, and $N$ points have label $l_2$ as follows.
    \begin{itemize}
    \item Datapoints with label $l_1$: for each domain value $u$ at attribute $A_k$, if $u \in V$, we introduce $p(A_k, u, l_1)$ once (and in this case, $A_k = \chi(u)$); or otherwise, we introduce $p(A_k, u, l_1)$ $M$ times.
    It is easy to see that there are $N = n + (d-2)m\cdot M$ datapoints with label $l_1$.
    \item Datapoints with label $l_2$: for each domain value $u$ at attribute $A_k$, we introduce $p(A_k, u, l_2)$ once, and we introduce $N - M$ fresh datapoints of the form
    $$(\square, \square, \cdots, \square).$$
    \end{itemize}
    
    This construction can be done in polynomial time of $m$ and $n$. 
    By construction, for each testpoint $\test_{u,v} = (c_1, c_2, \dots, c_d)$, if $c_k \in V$, then the domain value $c_k$ occurs exactly once as a domain value of $A_k$, or otherwise $M$ times among all datapoints with label $l_1$; and $c_k$ occurs exactly once as a domain value of $A_k$ among all datapoints with label $l_2$.
    
    Hence for each test point $\test_{u,v}$, $f_{\mathcal{D}}(\test_{u,v}) = l_1$, by noting that
    \begin{align*}
    \Pr(l_1 \mid \test_{u,v})_{\mathcal{D}} 
    &= \frac{1}{\Pr(\test_{u,v})_{\mathcal{D}}} \cdot \frac{1\cdot1\cdot M^{d-2}}{N^d} \cdot \frac{N}{2N} \\
    &> \frac{1}{\Pr(\test_{u,v})_{\mathcal{D}}} \cdot \frac{1\cdot 1 \cdot 1^{d-2}}{N^d} \cdot \frac{N}{2N} \\
    &= \Pr(l_2 \mid \test_{u,v})_{\mathcal{D}}.
    \end{align*}
    
    Now we argue that there is a vertex cover $S$ of $G$ with size at most $k$ if and only if we may alter at most $k$ missing values in $\mathcal{D}$ to obtain $\mathcal{D}'$ such that for each point $\test_{u,v}$,
    $$\Pr(l_1 \mid \test_{u,v})_{\mathcal{D}'} < \Pr(l_2 \mid \test_{u,v})_{\mathcal{D}'},$$
    which yields $f_{\mathcal{D}}(\test_{u,v}) \neq f_{\mathcal{D}'}(\test_{u,v})$.

    \framebox{$\Longrightarrow$}
    Let $S\subseteq V$ be a vertex cover of $G$.
    Consider the dataset $\mathcal{D}'$ obtained by altering for each $u \in S$ with $A_k = \chi(u)$,
    the datapoint
    $$p(A_k, u, l_1) = (\square, \cdots, u, \cdots, \square)$$
    into 
    $$(\square, \cdots, u', \cdots, \square),$$
    where $u' \notin V$.
    For every test point 
    $\test_{u,v}$, 
    we argue that either $u$ does not occur at all at attribute $\chi(u)$, or $v$ does not occur at all at attribute $\chi(v)$ in $\mathcal{D}'$. 
    Indeed, since $S$ is a vertex cover, either $u\in S$ or $v\in S$, and thus either $p(\chi(u), u, l_1)$ or $p(\chi(v), v, l_1)$ is altered so that either $u$ or $v$ does not occur at all at attributes $\chi(u)$ or $\chi(v)$ in $\mathcal{D}'$.
    Hence NBC would estimate that
    \begin{align*}
    \Pr(l_1 \mid \test_{u,v})_{\mathcal{D}'} = 0 < \frac{1}{\Pr(\test_{u,v})_{\mathcal{D}'}} \cdot \frac{1}{N^d} \cdot \frac{N}{2N} = \Pr(l_2 \mid \test_{u,v})_{\mathcal{D}'},
    \end{align*}
    as desired.
    
    \framebox{$\Longleftarrow$}
    Assume that there is a dataset $\mathcal{D}'$ with at most $k$ alternations to $\mathcal{D}$ such that 
    $f_{\mathcal{D}}(\test_{u,v}) = l_1$ but $f_{\mathcal{D}'}(\test_{u,v}) = l_2$ for every test point $\test_{u,v}$.
    It suffices to show that $G$ has a vertex cover of size at most $k$. Since $G$ must have a vertex cover of size $k \geq n$, we assume that $k < n$.
    
    Consider an arbitrary test point $\test_{u, v} = (c_1, c_2, \dots, c_n).$
    We have that 
    $$\Pr(l_1 \mid \test_{u, v})_{\mathcal{D}'} < \Pr(l_2 \mid \test_{1,2})_{\mathcal{D}'}.$$
    Let $x_i$ and $y_i$ be the absolute value of the change of frequency from $\mathcal{D}$ to $\mathcal{D}'$ of value $c_i$ for attribute $A_i$ in test points with labels $l_1$ and $l_2$ respectively.
    Then we have
    \begin{equation}
    \label{ineq:main}
    0 \leq (1 - x_i)(1-x_j)\cdot \prod_{k \in \{1,2,\dots, d\} \setminus \{i,j\}} (M-x_k) < \prod_{1 \leq k \leq d}(1+y_k).
    \end{equation}
    
    We argue that without loss of generality, we can assume all $x_i$ and $y_i$ are nonnegative: if $y_i$ is negative, then right-hand-side of (Eq.~\ref{ineq:main}) is $0$ and cannot hold; and if $x_i$ is negative, i.e., the alternations increase the frequency of certain attributes among points with label $l_1$, then not performing such alterations would also preserve the inequality (Eq.~\ref{ineq:main}) with less than $k$ alterations.
    
    Note that $x_1+x_2+\dots + x_d + y_1 + y_2 + \dots + y_d \leq k$.
    
    For each test point $\test_{u,v}$, let $A_i = \chi(u)$ and $A_j = \chi(v)$, and we argue that we must have $x_j = 1$ or $x_j = 1$ (or both). 
    Suppose for contradiction that $x_i = x_j = 0$. Then, we have 
    \begin{align*}
    &\prod_{k \in \{1,2,\dots, d\} \setminus \{i,j\}} (M-x_k) \\
    &= (1 - x_i)(1-x_j)\cdot \prod_{k \in \{1,2,\dots, d\} \setminus \{i,j\}} (M-x_k) \\
    &< \prod_{1 \leq k \leq d}(1+y_k) \\
    &\leq (1+\frac{y_1 + y_2 + \dots + y_d}{d})^d \\
    &\leq (1+\frac{n}{d})^d.
    \end{align*}
    On the other hand, we have for each $k \in \{1,2,\dots, d\} \setminus \{i,j\}$, $x_k \leq n$, and thus 
    $$M - x_k \geq M - n > (1+\frac{n}{d})^{3} \geq (1+\frac{n}{d})^{\frac{d}{d-2}},$$
    and it follows that 
    $$\prod_{k \in \{1,2,\dots, d\} \setminus \{i,j\}} (M-x_k) \geq (1+\frac{n}{d})^{d},$$
    which is a contradiction.
    
    Now consider the set 
    $$S = \{u \mid A_k = \chi(u), x_k = 1\}.$$ 
    The set $S$ is a vertex cover, since for every edge $\{u,v\}\in E$, if the test point $\test_{u,v}$ has 
    $A_i = \chi(u)$ and $A_j=\chi(v)$, we must have $x_i = 1$ or $x_j = 1$, and thus either $u\in S$ or $v\in S$.
    Since each $x_k = 1$ corresponds to an alteration, the size of $S$ is at most $k$.
    
    The proof concludes by noting that both $\mathcal{D}$ and $\mathcal{D}'$ are possible worlds of the incomplete dataset $\mathcal{D}^{\dagger}$ obtained by setting the altered cells in $\mathcal{D}'$ from $\mathcal{D}$ as missing cells.
\end{proof}

\subsection{Additional Examples}

\begin{figure*}
\centering
\subfloat[A possible world $\mathcal{D}$]{
\adjustbox{valign=b}{
\begin{tabular}{cc}
\begin{tabular}{c | c | c }
$X$ & $Y$ &  \\
\hline
$a$ & $b$ & \multirow{4}{1pt}{$l^*$}\\
$a$ & $\bot_1$ \\
$\bot_2$ & $b$ \\
$\bot_3$ & $b$
\end{tabular}
&
\begin{tabular}{c | c | c}
$X$ & $Y$ &  \\
\hline
$a$ & $b$ & \multirow{5}{1pt}{$l$}\\
$\bot_1$ & $b$ & \\
$\bot_2$ & $\bot_3$ & \\
$\bot_4$ & $\bot_5$ & \\
$\bot_6$ & $\bot_7$ &
\end{tabular}
\end{tabular}
}
}
\quad
\subfloat[$\mathcal{D}_+$ and $\mathcal{D}_+'$, obtained by altering $\bot_1$ and $\bot_3$ to $a$ and $b$ in $\mathcal{D}$ respectively.]{
\adjustbox{valign=b}{
\begin{tabular}{cc}
\begin{tabular}{c | c | c }
$X$ & $Y$ &  \\
\hline
$a$ & $b$ & \multirow{4}{1pt}{$l^*$}\\
$a$ & $\bot_1$ \\
$\bot_2$ & $b$ \\
$\bot_3$ & $b$
\end{tabular}
&
\begin{tabular}{c | c | c}
$X$ & $Y$ &  \\
\hline
$a$ & $b$ & \multirow{5}{1pt}{$l$}\\
\st{$\bot_1$} $a$ & $b$ & \\
$\bot_2$ & \st{$\bot_3$} $b$ & \\
$\bot_4$ & $\bot_5$ & \\
$\bot_6$ & $\bot_7$ &
\end{tabular}
\end{tabular}
}
}
\quad
\subfloat[$\mathcal{D}_-$ and $\mathcal{D}_-'$, obtained by altering $a$ and $b$ to $c_1$ and $c_1'$ in $\mathcal{D}$ respectively.]{
\adjustbox{valign=b}{
\begin{tabular}{cc}
\begin{tabular}{c | c | c }
$X$ & $Y$ &  \\
\hline
$a$ & $b$ & \multirow{4}{1pt}{$l^*$}\\
\st{$a$} $c_1$ & $\bot_1$ \\
$\bot_2$ & $b$ \\
$\bot_3$ & \st{$b$} $c_1'$
\end{tabular}
&
\begin{tabular}{c | c | c}
$X$ & $Y$ &  \\
\hline
$a$ & $b$ & \multirow{5}{1pt}{$l$}\\
$\bot_1$ & $b$ & \\
$\bot_2$ & $\bot_3$ & \\
$\bot_4$ & $\bot_5$ & \\
$\bot_6$ & $\bot_7$ &
\end{tabular}
\end{tabular}
}
}
\caption{
For the test point $\test = (a, b)$, altering $\bot_1$ to $a$ in $\mathcal{D}_+$ (among other possible ways) achieves the most increase in $\support{\mathcal{D}}{l}{\test}$, and altering $a$ to $c_1$ in $\mathcal{D}_-$ (among other possible ways) achieves the most decrease in $\support{\mathcal{D}}{l^*}{\test}$.
}
\label{fig:observation}
\end{figure*}

\begin{example}
\label{ex:strategy}
Let us consider the dataset $\mathcal{D}$ in Figure~\ref{fig:observation}(a) and a test point $\test = (a, b)$.
We have that.
$$\support{\mathcal{D}}{l^*}{\test} = \frac{4}{9} \cdot \frac{2}{4} \cdot \frac{3}{4} \quad\text{ and }\quad \support{\mathcal{D}}{l}{\test} = \frac{5}{9} \cdot \frac{1}{5} \cdot \frac{2}{5},$$
and thus $\support{\mathcal{D}}{l^*}{\test} > \support{\mathcal{D}}{l}{\test}$.

\begin{itemize}

\item To increase $\support{\mathcal{D}}{l}{\test}$, we can (1) alter the value $\bot_2$ in attribute $X$ into $a$ to obtain a dataset $\mathcal{D}_+$, or (2) alter the value $\bot_3$ in attribute $Y$ into $b$ to obtain a dataset $\mathcal{D}_+'$ shown in Figure~\ref{fig:observation}(b).
Note that 
$$\support{\mathcal{D}_+}{l}{\test} = \frac{5}{9} \cdot \frac{2}{5} \cdot \frac{2}{5} \quad\text{ and }\quad \support{\mathcal{D}_+'}{l}{\test} = \frac{5}{9} \cdot \frac{1}{5} \cdot \frac{3}{5},$$
performing (1) achieves a bigger reduction in $\support{\mathcal{D}}{l^*}{\test}$ than (2), since for any test point $x = (x_1, x_2)$ in $\mathcal{D}$, $\Pr(x_1 = a \mid l)_{\mathcal{D}} < \Pr(x_2 = b \mid l)_{\mathcal{D}}$.

\item To decrease $\support{\mathcal{D}}{l^*}{\test}$, we can (1) alter the value $a$ in attribute $X$ into some $c_1$ to obtain a dataset $\mathcal{D}_-$, or (2) alter the value $b$ in attribute $Y$ into some $c_1'$ to obtain a dataset $\mathcal{D}_-'$ shown in Figure~\ref{fig:observation}(c).
Note that 
$$\support{\mathcal{D}_-}{l^*}{\test} = \frac{4}{9} \cdot \frac{1}{4} \cdot \frac{3}{4} \quad\text{ and }\quad \support{\mathcal{D}_-'}{l^*}{\test} = \frac{4}{9} \cdot \frac{2}{4} \cdot \frac{2}{4},$$
performing (1) achieves a bigger reduction in $\support{\mathcal{D}}{l^*}{\test}$ than (2), since for any test point $x = (x_1, x_2)$ in $\mathcal{D}$, $ \Pr(x_1 = a \mid l^*)_{\mathcal{D}} < \Pr(x_2 = b \mid l^*)_{\mathcal{D}}$.
\end{itemize}
\end{example}

\begin{example}
\label{ex:same-step}
For $\mathcal{D}_+$ in Example~\ref{ex:strategy}, we have 
$$\support{\mathcal{D}_+}{l}{\test} = \frac{5}{9} \cdot \frac{2}{5} \cdot \frac{2}{5}.$$ 
In this case, altering $\bot_2$ to $a$ or $\bot_5$ to $b$ achieves the largest increase in $\support{\mathcal{D}_+}{l}{\test}$, since both values have the smallest frequency. Suppose that we alter $\bot_2$ to $a$ and obtain dataset $\mathcal{D}_{++}$. We have 
$$\support{\mathcal{D}_{++}}{l}{\test} = \frac{5}{9} \cdot \frac{3}{5} \cdot \frac{2}{5}.$$
Then, we would alter $\bot_5$ to $b$, and achieve
$$\support{\mathcal{D}_{+++}}{l}{\test} = \frac{5}{9} \cdot \frac{3}{5} \cdot \frac{3}{5}.$$
Hence O1 holds in this case, since
\begin{align*}
\support{\mathcal{D}_+}{l}{\test} - \support{\mathcal{D}}{l}{\test} 
&= \support{\mathcal{D}_{++}}{l}{\test} - \support{\mathcal{D}_{+}}{l}{\test} \\
&< \support{\mathcal{D}_{+++}}{l}{\test} - \support{\mathcal{D}_{++}}{l}{\test}
\end{align*}

For $\mathcal{D}_-$ in Example~\ref{ex:strategy}, we have 
$$\support{\mathcal{D}_-}{l^*}{\test} = \frac{4}{9} \cdot \frac{1}{4} \cdot \frac{3}{4}.$$ 
Different from $\mathcal{D}_+$, further altering the only $a$ in attribute $X$ to some other value to obtain $\mathcal{D}_{--}$ still achieves the largest decrease in $\support{\mathcal{D}_-}{l^*}{\test}$, since the frequency of $a$ is still the smallest. We have
$$\support{\mathcal{D}_{--}}{l^*}{\test} = \frac{4}{9} \cdot \frac{0}{4} \cdot \frac{3}{4}.$$
Hence O2 holds in this case, since
\begin{align*}
\support{\mathcal{D}_-}{l}{\test} - \support{\mathcal{D}}{l}{\test} 
&= \support{\mathcal{D}_{--}}{l}{\test} - \support{\mathcal{D}_{-}}{l}{\test}
\end{align*}
\end{example}

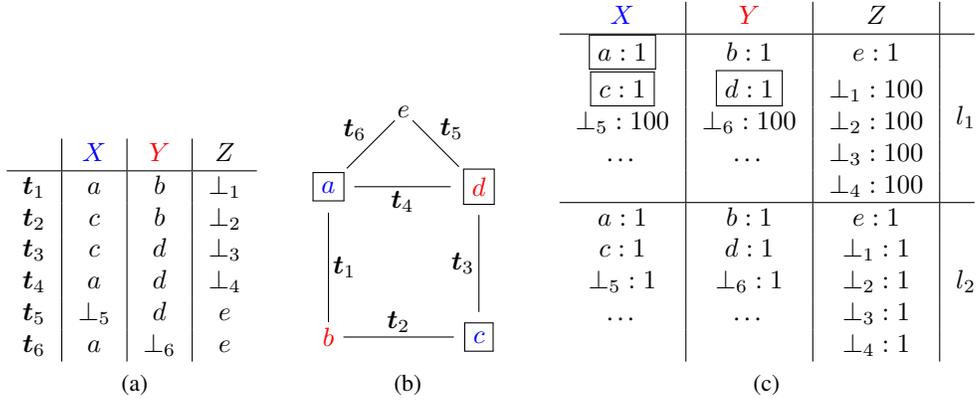
\begin{figure*}
\centering
\vspace{-1em}
\subfloat[]{
\adjustbox{valign=b}{\begin{tabular}{c | c | c | c}
& $\textcolor{blue}{X}$ & $\textcolor{red}{Y}$ & $Z$ \\
\hline
$\test_1$ & $a$& $b$& $\bot_1$\\
$\test_2$ & $c$& $b$& $\bot_2$\\
$\test_3$ & $c$& $d$& $\bot_3$\\
$\test_4$ & $a$& $d$& $\bot_4$\\
$\test_5$ & $\bot_5$ & $d$ & $e$\\
$\test_6$ & $a$ & $\bot_6$ & $e$ 
\end{tabular}
}
}
\quad
\subfloat[]{
\adjustbox{valign=b}{\begin{tikzpicture}[auto=left,vnode/.style={circle,black,inner sep=1pt,scale=1},el/.style = {inner sep=3/2pt}]
\node[vnode,red] (s1) at (0, 0) {$b$};
\node[vnode] (s2) at (2, 0) {\framebox{\textcolor{blue}{$c$}}};
\node[vnode] (s3) at (2, 2) {\framebox{\textcolor{red}{$d$}}};
\node[vnode] (s4) at (1, 3) {$e$};
\node[vnode] (s5) at (0, 2) {\framebox{\textcolor{blue}{$a$}}};
\path[-] (s1) edge[el] node {$\test_2$} (s2);
\path[-] (s2) edge[el] node {$\test_3$}  (s3);
\path[-] (s3) edge[el] node {$\test_4$}  (s5);
\path[-] (s5) edge[el] node {$\test_1$}  (s1);
\path[-] (s5) edge[el] node {$\test_6$}  (s4);
\path[-] (s4) edge[el] node {$\test_5$}  (s3);
\end{tikzpicture}
}
}
\quad
\subfloat[]{
\adjustbox{valign=b}{
\begin{tabular}{c | c | c | c}
$\textcolor{blue}{X}$ & $\textcolor{red}{Y}$ & $Z$ &\\
\hline
\framebox{$a: 1$}& $b: 1$& $e: 1$ & \multirow{5}{1pt}{$l_1$}\\
\framebox{$c: 1$}& \framebox{$d: 1$}& $\bot_1: 100$ & \\
$\bot_5: 100$& $\bot_6: 100$& $\bot_2: 100$ & \\
\dots & \dots & $\bot_3: 100$ & \\
& & $\bot_4: 100 $ & \\
\hline
$a: 1$& $b: 1$& $e: 1$ & \multirow{5}{1pt}{$l_2$}\\
$c: 1$& $d: 1$& $\bot_1: 1$ & \\
$\bot_5: 1$& $\bot_6: 1$& $\bot_2: 1$ & \\
\dots & \dots & $\bot_3: 1$ & \\
& & $\bot_4: 1 $ &
\end{tabular}
}
% \begin{tabular}{c | c | c | c}
% $X$ & $Y$ & $Z$ &\\
% \hline
% $a$& $b$& $\bot$ & \multirow{3}{1pt}{$l_1$}\\
% $c$& $d$& $\bot$ & \\
% $\bot$& $\bot$& $e$ & \\
% $\bot$& $\bot$& $1$ & \\
% $\bot$& $\bot$& $1$ & \\
% $\dots$ & & &  \\
% $\bot$& $\bot$& $2$ & \\
% $\bot$& $\bot$& $2$ & \\
% $\dots$ & & &  \\
% $\bot$& $\bot$& $3$ & \\
% $\bot$& $\bot$& $3$ & \\
% $\dots$ & & &  \\
% $\bot$& $\bot$& $4$ & \\
% $\bot$& $\bot$& $4$ & \\
% $\dots$ & & &  \\
% $5$& $\bot$& $\bot$ & \\
% $5$& $\bot$& $\bot$ & \\
% $\dots$ & & &  \\
% $\bot$& $6$& $\bot$ & \\
% $\bot$& $6$& $\bot$ & \\
% $\dots$ & & &  \\
% \hline
% $a$& $b$& $\bot$ & \multirow{3}{1pt}{$l_2$}\\
% $c$& $d$& $\bot$ & \\
% $\bot$& $\bot$& $e$ & \\
% $\bot$& $\bot$& $1$ & \\
% $\bot$& $\bot$& $2$ & \\
% $\bot$& $\bot$& $3$ & \\
% $\bot$& $\bot$& $4$ & \\
% $5$& $\bot$& $\bot$ & \\
% $\bot$& $6$& $\bot$ & \\
% \end{tabular}
}
% \vspace{-1em}
\caption{
The test points are shown in (a). 
The graph in (b) captures the correlations among the test points in (a).
A dataset is shown in (c).
Poisoning the minimum number of cells in (c) such that all test points in (a) are not certifiably robust can be simulated by finding a minimum vertex cover, marked by boxes, in the graph in (b).
% \xiating{TODO: fix the formatting\dots}
}
\label{fig:graph}
\end{figure*}

\begin{example}
\label{ex:graph}
Consider the $6$ test points shown in Figure~\ref{fig:graph}(a).
Note that since the test points $\test_1$, $\test_4$ and $\test_6$ agree on the attribute value $a$ for attribute $X$, the change in the relative frequency of the attribute value $a$ in the datasets affects the support values (and thus the predictions) to all $3$ test points simultaneously.
The graph in Figure~\ref{fig:graph}(b) captures precisely this correlation among the test points, in which each vertex represents a value from an attribute, and each edge represents a test point containing those two values at their corresponding attributes. The $3$ attributes $X$, $Y$ and $Z$ are distinguished by a $3$-coloring of the graph. 

Consider a complete dataset $\mathcal{D}$ containing $2N$ datapoints shown in Figure~\ref{fig:graph}(c), in which $N$ datapoints have label $l_1$ and $N$ datapoints have label $l_2$ for some sufficiently large $N$. 
For each attribute $A\in\{X, Y, Z\}$, the entry $v : n$ with label $l\in\{l_1, l_2\}$ indicates that there are $n$ datapoints with value $v$ for attribute $A$ in $\mathcal{D}$. 

It is easy to see that for each test point $\test_i$ in Figure~\ref{fig:graph}(a), 
$$f_{\mathcal{D}}(\test_i) = l_1,$$ 
since the relative frequencies of each $\bot_i$ among datapoints with label $l_1$ is much higher than those with label $l_2$.

Note that if we alter $\mathcal{D}$ to decrease the frequency of value $a$ in attribute $X$ in label $l_1$, this single alteration causes the support values of $l_1$ for test points $\test_1$, $\test_4$ and $\test_6$ to drop to $0$, and flips the predictions of all three test points to $l_2$. 
Therefore, the minimum number of alterations in the datasets corresponds to a minimum vertex cover in the graph in Figure~\ref{fig:graph}(b).
\end{example}

\section{Additional Experimental Results in Section~\ref{sec:experiments}}
\label{sec:more-experiments-appendix}

\subsection{Datasets}
\label{sec:datasets-appendix}
We use ten real-world datasets from Kaggle~\cite{web:kaggle}: \textsf{heart (HE)}~\cite{dat:heart}, \textsf{fitness-club (FC)}~\cite{dat:fitnessclub}, \textsf{fetal-health (FH)}~\cite{dat:fetalhealth}, \textsf{employee (EM)}~\cite{dat:employee}, \textsf{winequalityN (WQ)}~\cite{dat:winequality}, \textsf{company-bankruptcy (CB)}~\cite{dat:companybankruptcy}, \textsf{Mushroom (MR)}~\cite{dat:mushroom}, \textsf{bodyPerformance (BP)}~\cite{dat:bodyPerformance}, \textsf{star-classification (SC)}~\cite{dat:starclassification}, \textsf{creditcard (CC)}~\cite{dat:creditcard}. The metadata of our datasets are summarized in Table~\ref{tab:datasets}.
\begin{table}[!ht]
    \centering
    \begin{tabular}{c|c|c|c}
      \toprule
      Dataset &  \# Rows & \# Features & \# Labels \\ 
      \hline
      \textsf{heart (HE)} & 918 & 12 & 2 \\ 
      \textsf{fitness-club (FC)} & 1,500 & 8 & 2 \\
      \textsf{fetal-health (FH)} & 2,126 & 22 & 3 \\
      \textsf{employee (EM)} & 4,653 & 9 & 2 \\ 
      \textsf{winequalityN (WQ)} & 6,497 & 13 & 2 \\
      \textsf{Company-Bankruptcy (CB)} & 6,819 & 96 & 2 \\
      \textsf{Mushroom (MR)} & 8,124 & 23 & 2 \\
      \textsf{bodyPerformance (BP)} & 13,393 & 12 & 4 \\ 
      \textsf{star-classification (SC)} & 100,000 & 18 & 3 \\
      \textsf{creditcard (CC)} & 568,630 & 31 & 2 \\
      \bottomrule
    \end{tabular}
    \caption{Metadata of the datasets.} 
    \label{tab:datasets}
\end{table}

\subsection{Additional Results for Decision Problem}
\label{sec:more-results-decision-appendix}
In this section, we provide additional results for the decision problem in Figure~\ref{fig:decision_time_vs_missing_rate_appendix} and Figure~\ref{fig:decision_time_vs_number_of_test_points_appendix}.
\begin{figure*}[!ht]
    \centering
    \includegraphics[width=0.96\textwidth]{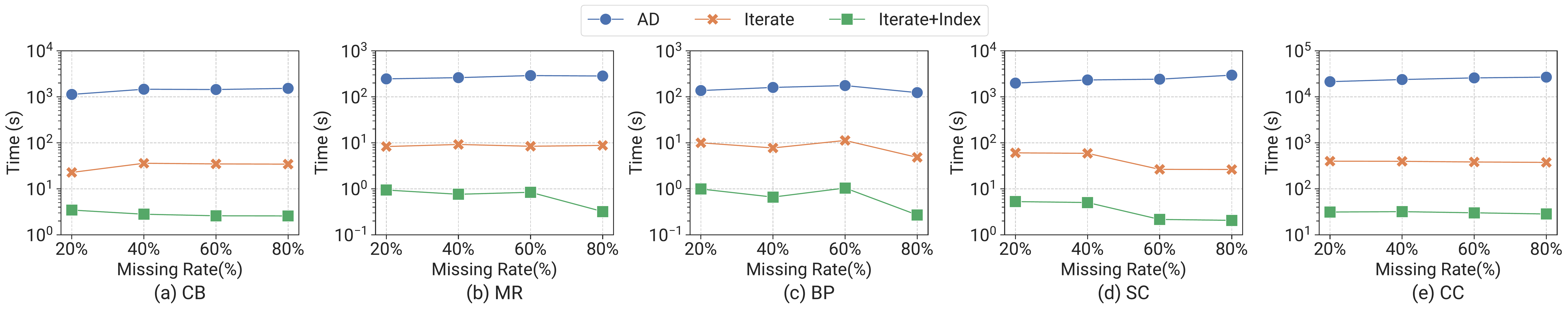}
    \caption{Decision - Running Time vs Missing Rates on Different Datasets}
    \label{fig:decision_time_vs_missing_rate_appendix}
\end{figure*}

\begin{figure*}[!ht]
    \centering
    \includegraphics[width=0.96\textwidth]{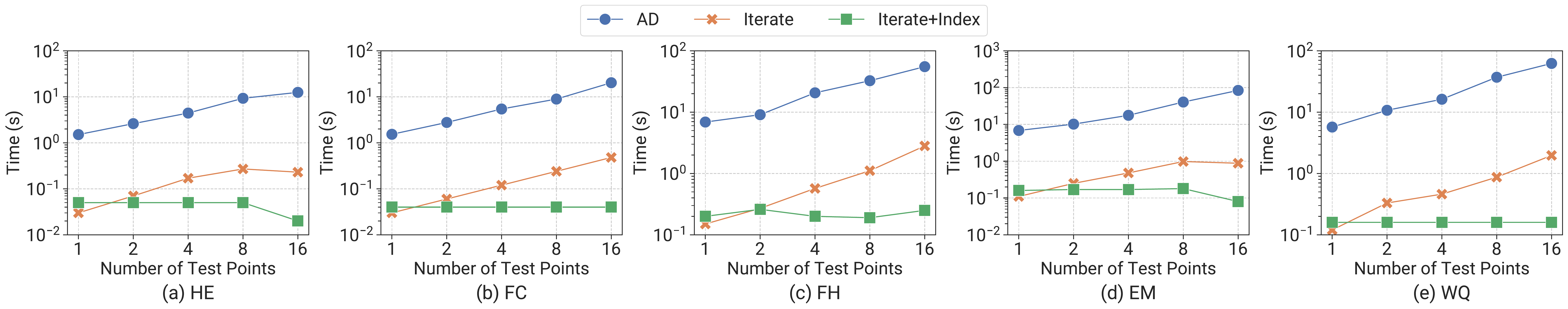}
    \caption{Decision - Running Time vs Number of Test Points on Different Datasets}
    \label{fig:decision_time_vs_number_of_test_points_appendix}
\end{figure*}

\subsection{Additional Results for Poisoning Problem}
\label{sec:more-results-poisoning-appendix}
\begin{figure*}[!ht]
    \centering
    \includegraphics[width=0.96\textwidth]{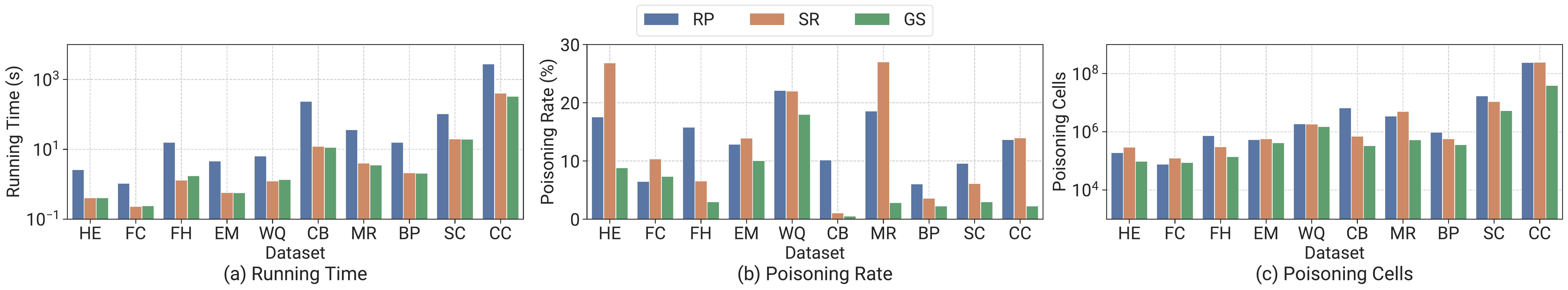}
    \caption{Multiple Points Data Poisoning---Running Time and Poisoning Rate on Different Datasets}
    \label{fig:multiple_point_poisoning}
\end{figure*}

We show the additional results for the poisoning problem in Figure~\ref{fig:multiple_point_poisoning}. As for the \textsf{Multiple Points Data Poisoning Problem}, \textbf{GS} is more effective than \textbf{RP} and \textbf{SR} over most datasets. Note that despite \textbf{SR} and \textbf{GS} have similar running times but their poisoning rate varies: This implies that the optimal strategies A1 and A2 mentioned in Section~\ref{sec:poisoning} are efficient to execute and the optimal sequence of applying A1 and A2 can minimize the number of poisoned cells drastically. Furthermore, Figure~\ref{fig:multiple_point_poisoning} shows the exact number of poisoned cells. Note that the number of poisoning cells is not small in Figure~\ref{fig:multiple_point_poisoning}, which indicates that the answer to the \textsf{Decision Problem} can help us to accelerate the data cleaning process.

\end{document}